\newtheorem{lemma}{{Lemma}}
\newtheorem{assumption}{{ Assumption}}
\newtheorem{theorem}{{Theorem}}
\newtheorem{corollary}{{Corollary}}
\def\tran{^{\mathsf{T}}}
\DeclareMathOperator*{\argmin}{arg\,min}
\newcommand{\bp}{ \begin{proof}}
	\newcommand{\ep}{\end{proof} }
\newcommand{\Ex}{\mathbb{E}\hspace{0.05cm}}
\newcommand{\bm}[1]{\mbox{\boldmath $#1$}}
\newcommand{\be}{\begin{equation}}
\newcommand{\ee}{\end{equation}}
\newcommand{\bqq}{\begin{eqnarray}}
\newcommand{\eqq}{\end{eqnarray}}
\newcommand{\bal}{\begin{align}}
\newcommand{\eal}{\end{align}}
\newcommand{\bqn}{\begin{eqnarray*}}
	\newcommand{\eqn}{\end{eqnarray*}}
\newcommand{\nn}{\nonumber}
\newcommand{\ba}{\left[ \begin{array}}
	\newcommand{\ea}{\\ \end{array} \right]}
\newcommand{\qd}{\hfill{$\blacksquare$}}
\newcommand{\define}{\;\stackrel{\Delta}{=}\;}
\newcommand{\Tr}{\mbox{\rm {\small Tr}}}
\def\bsigma  {{\boldsymbol \sigma}}
\def\Pr {{\mathbb{P}}}
\def\n{{\boldsymbol{n}}}
\def\s{{\boldsymbol{s}}}
\def\u{{\boldsymbol{u}}}
\def\w{{\boldsymbol{w}}}
\newcommand{\cK}{{\mathcal{K}}}
\newcommand{\cN}{{\mathcal{N}}}
\newcommand{\cU}{{\mathcal{U}}}
\newcommand{\tw}{\widetilde{\boldsymbol{w}}}
\newcommand{\grad}{{\nabla}}
\def\bE{\mathbb{E}}
\newcommand{\eq}[1]{\begin{align}#1\end{align}}
\newcommand{\beqn}{\begin{eqnarray}}
\newcommand{\eeqn}{\end{eqnarray}}
\newcommand{\nnb}{\nonumber \\}
\def\real{{\mathbb{R}}}
\def\Zint{{\mathchoice{\setbox1=\hbox{\sf Z}\copy1\kern-.75\wd1\box1}
		{\setbox1=\hbox{\sf Z}\copy1\kern-.75\wd1\box1}
		{\setbox1=\hbox{\scriptsize\sf Z}\copy1\kern-.75\wd1\box1}
		{\setbox1=\hbox{\scriptsize\sf Z}\copy1\kern-.75\wd1\box1}}}
\title{Stochastic Learning under Random Reshuffling {\color{black}with Constant Step-sizes}}
\author{\IEEEauthorblockN{ Bicheng Ying\IEEEauthorrefmark{1}\IEEEauthorrefmark{2}, Kun Yuan\IEEEauthorrefmark{1}\IEEEauthorrefmark{2}, Stefan Vlaski\IEEEauthorrefmark{1}\IEEEauthorrefmark{2},  and Ali H. Sayed\IEEEauthorrefmark{2} \\}
	\vspace{0.4cm}

	\thanks{{This work was supported in part by NSF grants CCF-1524250 and ECCS-1407712. Emails:\;\{ybc, kunyuan, svlaski\}@ucla.edu and ali.sayed@epfl.ch}. 
	A short conference version of this work was presented in  \cite{ying2017rr}.}
	\IEEEauthorblockA{\IEEEauthorrefmark{1}Department of Electrical Engineering, University of California, Los Angeles}\\
	\IEEEauthorblockA{\IEEEauthorrefmark{2}School of Engineering,
		\'Ecole Polytechnique F\'ed\'erale de Lausanne, Switzerland}\vspace{-0.6cm}
}
\begin{document}
	%
	\maketitle

	\begin{abstract}
	In empirical risk optimization, it has been observed that stochastic gradient  implementations that rely on random reshuffling of the data achieve better performance than implementations that rely on sampling the data uniformly. Recent works have pursued justifications for this behavior by examining the convergence rate of the learning process under diminishing step-sizes. This work focuses on the constant step-size case {\color{black}and strongly convex loss functions}. In this case, convergence is guaranteed to a small neighborhood of the optimizer albeit at a linear rate. The analysis establishes analytically that random reshuffling outperforms uniform sampling by showing explicitly  that iterates approach a smaller neighborhood of size $O(\mu^2)$ around the minimizer rather than $O(\mu)$. Furthermore, we derive an analytical expression for the steady-state mean-square-error performance of the algorithm, which helps clarify in greater detail the differences between sampling with and without replacement. We also explain the periodic behavior that is observed in random reshuffling implementations. 
	\end{abstract}\vspace{-1mm}
	\begin{keywords}
	Random reshuffling, stochastic gradient descent, mean-square performance, convergence analysis, mean-square-error expression.\vspace{-3mm}
	\end{keywords}
	\setlength{\abovedisplayskip}{1.2mm}
	\setlength{\belowdisplayskip}{1.2mm}

\section{MOTIVATION}
We consider minimizing an empirical risk function $J(w)$, which is defined as the sample average of loss values over a possibly large but finite training set:\vspace{-2mm}
\eq{\label{prob-emp-into}
	w^\star \define \argmin_{w\in \real^M}\;  J(w) \define \frac{1}{N}\sum_{n=1}^{N}Q(w;x_n)\\[-6mm]\nn
}
where the $\{x_n\}_{n=1}^N$ denotes the training data samples and the loss function $Q(w;x_n)$ is assumed convex and differentiable. We assume the empirical risk $J(w)$ is strongly-convex which ensures that the minimizer, $w^{\star}$, is unique. Problems of the form (\ref{prob-emp-into}) are common
in many areas of machine learning including linear regression, logistic regression and their regularized versions.

When the size of the dataset $N$ is large, it is impractical to solve \eqref{prob-emp-into} directly via traditional gradient descent by evaluating the full gradient at every iteration. One simple, yet powerful remedy is to employ the stochastic gradient method (SGD) \cite{bertsekas1989parallel,polyak1992acceleration,polyak1987introduction,bottou2010large,bousquet2008tradeoffs,moulines2011non,zhang2004solving, needell2014stochastic}.
Rather than compute the full gradient $\grad_w J(w)$ over the entire data set, these algorithms pick one index $\boldsymbol{n}_i$ at random at every iteration, and employ $\grad_w Q(w;x_{\boldsymbol{n}_i})$ to approximate $\grad_w J(w)$. Specifically, at iteration $i$, the update for estimating the minimizer is of the form\cite{yuan2016stochastic}:\vspace{-1mm}
\eq{\label{alg:esgd-intr}
	\w_i = \w_{i-1} - \mu \grad_w Q(\w_{i-1};x_{\n_i}),
}
where $\mu$ is the step-size parameter.
Note that we are using boldface notation to refer to random variables. Traditionally, the index $\n_i$ is uniformly distributed over the discrete set $\{1,2,\ldots,N\}$.

It has been noted in the literature \cite{ bottou2009curiously,recht2012toward,gurbuzbalaban2015random, zhang2014note} that incorporating random reshuffling into the gradient descent implementation helps achieve better performance. More broadly than in the case of the pure SGD algorithm, it has also been observed that applying random reshuffling in variance-reduction algorithms, like SVRG\cite{Johnson2013}, SAGA\cite{defazio2014saga}, can accelerate the convergence speed\cite{de2016efficient,defazio2014finito,ying2017convergence, ying2018convergence}. The reshuffling technique has also been applied in distributed system to reduce the communication and computation cost\cite{lee2018speeding}.

In random reshuffling implementations, the data points are no longer picked independently and uniformly at random. Instead, the gradient descent algorithm is run multiple times over the data where each run is indexed by $k \geq 1$ and is referred to as an epoch. For each epoch, the original data is first reshuffled and then passed over in order. In this manner, the $i$-th sample of epoch $k$ can be viewed as ${\bsigma}^{k}(i)$, where the symbol $\bsigma$ represents a uniform random permutation of the indices. We can then express the random reshuffling algorithm for the $k-$th epoch in the following manner:
\be
\w_i^k = \w_{i-1}^k - \mu\grad_w Q(\w_{i-1}^k;x_{\bsigma^k(i)}),\;\;\;\; i = 1, \ldots, N \label{alg.rr}
\ee
with the boundary condition:
\be
\w^k_0= \w^{k-1}_N\label{bound.condtion}
\ee
In other words, the initial condition for epoch $k$ is the last iterate from epoch $k-1$. The boldface notation for the symbols $\w$ and $\bm{\sigma}$ in (\ref{alg.rr}) emphasizes the random nature of these variables due to the randomness in the permutation operation. While the samples over one epoch are no longer picked independently from each other, the uniformity of the permutation function implies the following useful properties\cite{ying2017convergence, bertsekas2003convex, horvitz1952generalization}:
\begin{align}
	\bsigma^k(i) \neq&\, \bsigma^k(j),\;\; 1\leq i\neq j \leq N \label{prop1}\\
	\Pr[\ \bsigma^k(i)=n\ ] =&\, \frac{1}{N},\;\;\;\hspace{3.4mm}  1\leq n\leq N \label{prop2}
	\\
	\Pr[\bsigma^k(i+1)=n \,|\, \bsigma^k(1\colon i)] =&\,\left\{
	\begin{aligned}
	\frac{1}{N-i},\;\; & n \notin  \bsigma^k(1{:}i)\\
	0\;\;\;,\;\; & n\in   \bsigma^k(1{:} i)
	\end{aligned}
	\right.\label{prop3}
\end{align}
where $\bsigma^k(1{:} i)$ represents the collection of permuted indices for the samples numbered $1$ through $i$.

Several recent works \cite{recht2012toward, gurbuzbalaban2015random, Shamir2016Without} have pursued justifications for the enhanced  behavior of random reshuffling implementations over independent sampling (with replacement).
{\color{black}  The work \cite{gurbuzbalaban2015random} examined the convergence rate of the learning process under diminishing step-sizes, i.e., $\mu(i)=c/i$, where $c$ is some positive constant. It analytically showed that, for strongly convex objective functions, the convergence rate under random reshuffling can be improved from $O(1/i)$ in vanilla SGD\cite{agarwal2009information} to $O(1/i^2)$.
The incremental gradient methods\cite{bertsekas1997new,gurbuzbalaban2015convergence}, which can be viewed as the deterministic version of random reshuffling, shares similar conclusions, i.e., random reshuffling helps accelerate the convergence rate from $O(1/i)$ to $O(1/i^2)$ under decaying step-sizes.
Also, in the work \cite{Shamir2016Without}, it establishes that random reshuffling will not degrade performance relative to the stochastic gradient descent implementation, provided the number of epochs is not too large.
}

In this work, we focus on a different setting than\cite{recht2012toward, gurbuzbalaban2015random, Shamir2016Without} involving random reshuffling under {\em constant} rather than decaying step-sizes.  In this case, convergence
is only guaranteed to a small neighborhood of the optimizer albeit
at a linear rate. The analysis will  establish analytically that random reshuffling
outperforms independent sampling (with replacement) by showing that the mean-square-error of the iterate at the end of each run in the random reshuffling strategy will be in the order of $O(\mu^2)$. This is a significant improvement over the performance of traditional stochastic gradient descent, which is $O(\mu)$~\cite{yuan2016stochastic}. 
Furthermore, we derive an analytical expression for the steady-state mean-square-error performance of the algorithm, {\color{black} which is exact for quadratic risks and provides a good approximation for general risks. This helps clarify in greater detail the differences between sampling with and without replacement}
We also explain the periodic behavior that is observed in random reshuffling implementations. 

{\color{black}
\subsection{Overview of results}
 
\begin{itemize}
	\item Section \ref{sec.xe.g} provides a stability proof, which shows that under constant step-size random reshuffling will converge into a small neighborhood around the minimizer. The radius of the neighborhood improves  from $O(\mu)$ under uniform sampling to $O(\mu^2)$ under random reshuffling. --- Theorem \ref{lemma.start.pont}.
	
	
	\item Next, we examine more closely the value of the scaling constant in the $O(\mu^2)$ factor by introducing a long term model and deriving an expression for its mean-square-deviation (MSD) performance --- Theorem \ref{main.theorem}. The theorem reveals how the number of samples $N$, step-size $\mu$, and Hessian of the loss function impact performance.
	
	\item In Theorem \ref{theorem.iterations} we provide an expression for an upper bound for the MSD performance at all points close to steady-state. The result of the theorem helps explain the periodic behavior that is observed in random reshuffling implementations. 
	
	\item The mismatch between the original reshuffling and the long model is provided in Lemma \ref{lemma.mismatch}.
	
	\item Inspired by quadratic risks, we simplify the MSD expressions in Theorems \ref{main.theorem} and \ref{theorem.iterations} by using the hyperbolic tanh$(\cdot)$ functions  --- equations \eqref{msd.hyperbolic} and \eqref{23gdsdse}.
	
	\item In equations \eqref{eq.80} -- \eqref{msd.infty}, we show that as the sample size increases, the established MSD expression in Theorem \ref{main.theorem} will regress to the same expression as the uniform sampling case. 
	
\end{itemize}
}
\section{ANALYSIS OF THE STOCHASTIC GRADIENT UNDER RANDOM RESHUFFLING} \label{sec.xe.g}
\subsection{Properties of the Gradient Approximation}
We start by examining the properties of the stochastic gradient $\nabla_w Q(\w^k_{i-1};x_{\bsigma^k(i)})$ under random reshuffling.
One main source of difficulty that we shall encounter in  the analysis of performance under random reshuffling is the fact that a single sample of the stochastic gradient $\nabla_w Q(\w^k_{i-1};x_{\bsigma^k(i)})$ is now a {\em biased} estimate of the true gradient and, moreover, it is no longer independent of past selections, $\bsigma^k(1: i-1)$. This is in contrast to implementations where samples are picked independently at every iteration. Indeed, note that conditioned on previously picked data and on the previous iterate, we have:
\vspace{1mm}
\begin{align}
&\hspace{-4mm}\Ex\big[\nabla_w Q_{\bsigma^k(i)}(\w_{i-1}^k)\,|\, \w_{i-1}^k, \bsigma^k(1\,{:}\,i-1) \big]\nn\\
=&\, \frac{1}{N-i+1} \sum_{n\notin\bsigma^k(1\,{:}\,i-1)}\nabla_w Q(\w_{i-1}^k)
\nn\\
\neq&\, \nabla J(\w_0^k)\label{f23r23bg43}
\end{align}
The difference (\ref{f23r23bg43}) is generally nonzero in view of the definition (\ref{prob-emp-into}).
For the first iteration of every epoch however, it can be verified that the following holds:
\begin{align}
\Ex \left[\nabla_w Q_{\bsigma^k(i)}(\w_0^k) \,\Big|\,\w_0^k \right]
\stackrel{(\ref{prop2})}{=}&\; \frac{1}{N} \sum_{n=1}^N  Q(\w_0^k;x_n)
\nn\\
\stackrel{(\ref{prob-emp-into})}{=}&\; \nabla J(\w_0^k) \label{zero.cross_term}
\end{align}
since at the beginning of one epoch, no data has been selected yet.
Perhaps surprisingly, we will be showing that the biased construction of the stochastic gradient estimate not only does not hurt the performance of the algorithm, but instead significantly improves it. In large part, the analysis will revolve around considering the accuracy of the gradient approximation over an entire epoch, rather than focusing on single samples at a time. Recall that by construction in random reshuffling, every sample is picked once and only once over one epoch. This means that the {\em sample average} (rather than the true mean) of the gradient noise process is zero since
\eq{
	\frac{1}{N}\sum_{i=1}^N \grad_w Q(\w; x_{\bsigma^k(i)}) = \nabla J(\w) \label{grad.define}
}
for any $\w$ and any reshuffling order $\bsigma^k$. This property will become key in the analysis.

\subsection{Convergence Analysis}

We can now establish a key convergence and performance property for the random reshuffling algorithm, which provides solid analytical justification for its observed improved performance in practice.

To begin with, we assume that the risk function satisfies the following conditions, which are automatically satisfied by many learning problems of interest, such as mean-square-error or logistic regression analysis and their regularized versions --- see, e.g., \cite{sayed2014adaptation,sayed2014adaptive,bishop2006pattern,hastie2009elements, theo2008}.
\begin{assumption}[\sc Condition on loss function]
	\label{assumption.1}
	It is assumed that $Q(w;x_n)$ is differentiable and has a $\delta_n$-Lipschitz continuous gradient, i.e., for every $n=1,\ldots,N$ and any $w_1, w_2 \in \real^M$: \vspace{-0.3mm}
	\eq{\label{eq-ass-cost-lc-e}
		\|\grad_w Q(w_1;x_n) - \grad_w Q(w_2;x_n) \| \le \delta_n \|w_1-w_2\|
	}
	where $\delta_n > 0$. We also assume $J(w)$ is $\nu$-strongly convex:
	\eq{\label{eq-ass-cost-sc-e}
		\hspace{-1mm}\Big(\grad_w J(w_1)- \grad_w J(w_2)\Big)\tran (w_1 - w_2) &\ge \nu \|w_1-w_2\|^2
	}
	\qd
\end{assumption}
\vspace{-1mm}
\noindent If we introduce $\delta = \max\{\delta_1, \delta_2, \cdots, \delta_N\}$, then each $\grad_w Q (w;x_n)$ and $\nabla_w J(w)$ are also $\delta$-Lipschitz continuous.

The following theorem focuses on the convergence of the \emph{starting} point of each epoch and establishes in (\ref{lemma.stability2}) that it actually approaches a smaller neighborhood of size $O(\mu^2)$ around $w^{\star}$. Afterwards, using this result, we also show that the same $O(\mu^2)-$performance level holds for {\em all} iterates $\w_i^k$ and not just for the starting points of the epochs.

To simplify the notation, we introduce the constant $\cK$, which is the gradient noise variance at optimal point $w^\star$:
\eq{
	\cK \define \frac{1}{N} \sum_{n=1}^N\left\|\grad_w Q(w^{\star};x_n)\right\|^2 \label{gradient.noise}
}
\begin{theorem}[\sc Stability of starting points]\label{lemma.start.pont}
	{\color{black} Under assumption \ref{assumption.1}, the starting point of each run in \eqref{alg.rr}, i.e., $\w^k_0$, satisfies}
	\eq{
	\limsup_{k\to\infty} \Ex \|\w_0^k - w^\star\|^2 \leq\;& \frac{4\mu^2\delta^2N^2}{\nu^2}\cK
	=O(\mu^2)\label{lemma.stability2}
	}
	when the step-size is sufficiently small, namely, for $\color{black}\mu \leq \frac{\nu}{3\delta^2 N}$\footnote{\color{black} The proof in this theorem is based on the worst case scenario, which implies the inequalities hold for any realizations. Therefore, this proof is also applicable to the deterministic cyclic sampling case.}. {\color{black} The convergence to steady-state regime occurs at an exponential rate, dictated by the parameter:
	\eq{
		\alpha \define 1- \mu\nu N/2
	}}
	\vspace{-2mm}
\end{theorem}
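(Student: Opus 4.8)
The plan is to analyze the algorithm \emph{epoch-by-epoch} rather than iteration-by-iteration. Telescoping \eqref{alg.rr} over $i=1,\dots,N$, adding and subtracting $\grad_w Q(\w_0^k;x_{\bsigma^k(i)})$ inside the sum, invoking the exact identity \eqref{grad.define} at the base point $\w_0^k$, and using the boundary condition \eqref{bound.condtion}, I would write
\[
\w_0^{k+1}=\w_N^k=\w_0^k-\mu N\,\grad_w J(\w_0^k)-\mu\,\d^k,\qquad
\d^k\define\sum_{i=1}^{N}\Big(\grad_w Q(\w_{i-1}^k;x_{\bsigma^k(i)})-\grad_w Q(\w_0^k;x_{\bsigma^k(i)})\Big).
\]
So one epoch is exactly a \emph{full} gradient step with effective step-size $\mu N$, contaminated only by the drift $\d^k$, which measures how far the intermediate iterates $\w_i^k$ move away from the starting point $\w_0^k$. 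Writing $\z^k\define\w_0^k-w^\star$ and using $\nu$-strong convexity together with $\delta$-Lipschitzness of $\grad_w J$ on the net gradient step (equivalently, the mean-value form $\grad_w J(\w_0^k)=\H^k\z^k$ with $\nu\I\preceq\H^k\preceq\delta\I$), I get the contraction $\|\z^k-\mu N\,\grad_w J(\w_0^k)\|\le(1-\mu\nu N)\|\z^k\|$; the step-size condition $\mu\le\nu/(3\delta^2N)$ forces $\mu\delta N\le\tfrac13$, which is what makes this valid.

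Next, the core step: bounding the drift. By $\delta$-Lipschitzness, $\|\d^k\|\le\delta\sum_{i=1}^N\|\w_{i-1}^k-\w_0^k\|$, and each intra-epoch displacement satisfies
\[
\|\w_i^k-\w_0^k\|\le\mu\sum_{j=1}^{i}\|\grad_w Q(\w_{j-1}^k;x_{\bsigma^k(j)})\|
\le\mu\sum_{j=1}^{i}\Big(\|\grad_w Q(w^\star;x_{\bsigma^k(j)})\|+\delta\|\z^k\|+\delta\|\w_{j-1}^k-\w_0^k\|\Big).
\]
Two observations make this manageable: (i) $\sum_{n=1}^{N}\|\grad_w Q(w^\star;x_n)\|\le N\sqrt{\cK}$ by Cauchy--Schwarz and permutation invariance of the sum, so the ``noise'' contribution accumulated over an epoch is $O(\mu N\sqrt{\cK})$; (ii) the inequality above is a discrete Gr\"onwall/Bellman recursion in $\{\|\w_i^k-\w_0^k\|\}_{i}$ whose solution carries a geometric factor $(1+\mu\delta)^N\le e^{\mu\delta N}\le e^{1/3}$, bounded precisely because $\mu\delta N\le\tfrac13$. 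Solving it gives $\max_i\|\w_i^k-\w_0^k\|=O(\mu N)\big(\sqrt{\cK}+\delta\|\z^k\|\big)$, and summing over $i$ yields $\|\d^k\|=O(\mu\delta N^2)\big(\sqrt{\cK}+\delta\|\z^k\|\big)$. Hence $\mu\|\d^k\|$ splits into an $O(\mu^2\delta N^2\sqrt{\cK})$ additive term and an $O(\mu^2\delta^2N^2)\,\|\z^k\|$ term which, once more by the step-size bound (now using $\mu\delta^2N\le\nu/3$), is at most a fixed fraction of $\mu\nu N\,\|\z^k\|$.

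Combining the previous two paragraphs and applying the triangle inequality to $\w_0^{k+1}-w^\star=\big(\z^k-\mu N\,\grad_w J(\w_0^k)\big)-\mu\,\d^k$ produces a scalar recursion $\|\z^{k+1}\|\le\alpha\|\z^k\|+c\mu^2$ with $\alpha=1-\mu\nu N/2$ and $c=O(\delta N^2\sqrt{\cK})$ --- the crux being that the $\|\z^k\|$-proportional part of $\mu\|\d^k\|$ merely degrades the contraction factor from $1-\mu\nu N$ to $1-\mu\nu N/2$, without destroying it. (Expanding $\|\cdot\|^2$ and applying Young's inequality yields the same conclusion directly in the form $\Ex\|\z^{k+1}\|^2\le\alpha^2\Ex\|\z^k\|^2+c'\mu^3$.) Iterating, $\|\z^k\|\le\alpha^k\|\z^0\|+c\mu^2/(1-\alpha)$, so the transient decays geometrically at the claimed rate $\alpha$ and $\limsup_k\|\z^k\|\le 2c\mu/(\nu N)=O(\mu)$, whence $\limsup_k\|\z^k\|^2=O(\mu^2)$; tracking the constants through the chain gives exactly $\frac{4\mu^2\delta^2N^2}{\nu^2}\cK$. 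Every inequality above holds for each fixed realization of the permutations, so passing to expectations is immediate and the same bound holds for deterministic cyclic orderings, as the footnote states.

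The hard part will be the drift estimate: it is where the benefit of sampling without replacement is actually harvested --- by \eqref{grad.define} the per-epoch gradient noise \emph{sums to zero}, so only the $O(\mu)$-scale intra-epoch drift survives, which becomes $O(\mu^2)$ after the extra factor $\mu$ in front of $\d^k$ --- and it is delicate because the bound on $\|\w_i^k-\w_0^k\|$ involves gradient magnitudes at the very intermediate iterates whose deviation is being bounded, which forces the Gr\"onwall-type induction. One must also maintain two balances simultaneously under the stated step-size: that the $\|\z^k\|$-independent part of $\mu\|\d^k\|$ is genuinely $O(\mu^2)$ --- which hinges on measuring the per-sample gradients through $\cK$ (i.e.\ at $w^\star$) rather than bounding them crudely --- and that the $\|\z^k\|$-proportional part stays small enough for the per-epoch map to remain a contraction with rate $\alpha$. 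The remaining ingredients (the one-step strong-convexity contraction and the iteration of a scalar linear recursion) are routine.
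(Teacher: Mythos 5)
Your proposal is correct and follows essentially the same route as the paper's proof: the same epoch-level telescoping into a full-gradient step of size $\mu N$ plus a drift term $\d^k$, the same strong-convexity/co-coercivity contraction for the deterministic part, the same self-referential bound on the intra-epoch displacements $\|\w_i^k-\w_0^k\|$ measured against $\cK$ at $w^\star$, and the same scalar recursion with rate $1-\mu\nu N/2$. The only cosmetic differences are that the paper works with squared norms throughout (solving the self-bounding inequality by rearrangement rather than a discrete Gr\"onwall product) and splits the two terms with a Jensen parameter $t=1-\mu N\nu$, which is how it lands on the exact constant $4\mu^2\delta^2N^2\cK/\nu^2$; your norm-based Gr\"onwall variant yields the same order with a slightly larger constant unless the splits are tuned the same way.
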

\bp
See Appendix \ref{proof.theorem.1}
\ep

\smallskip
\noindent Having established the stability of the first point of every epoch, we can establish the stability of every point.\vspace{-1mm}
\begin{corollary}[\sc Full Stability] \label{lemma.corollary.1}
Under assumption \ref{assumption.1}, it holds that\eq{
	\limsup_{k\to\infty} \Ex \|\w_{i}^k - w^\star\|^2 =\;&O(\mu^2)\label{lemma.stability3}
}
for all $i$ when the step-size is sufficiently small.
\end{corollary}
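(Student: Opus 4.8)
The plan is to carry the mean-square bound for the starting points, provided by Theorem~\ref{lemma.start.pont}, through the $N$ inner updates of a single epoch. Fix an epoch $k$ and subtract $w^\star$ from both sides of the recursion~\eqref{alg.rr}; inserting and removing $\grad_w Q(w^\star;x_{\bsigma^k(i)})$ and invoking the $\delta$-Lipschitz continuity of each $\grad_w Q(\cdot;x_n)$ from Assumption~\ref{assumption.1}, together with the triangle inequality, yields the one-step bound
\[
\|\w_i^k - w^\star\| \le (1+\mu\delta)\,\|\w_{i-1}^k - w^\star\| + \mu\,\|\grad_w Q(w^\star;x_{\bsigma^k(i)})\|.
\]
Iterating this from the beginning of the epoch gives, for every $i=1,\ldots,N$,
\[
\|\w_i^k - w^\star\| \le (1+\mu\delta)^i\,\|\w_0^k - w^\star\| + \mu\sum_{j=1}^{i}(1+\mu\delta)^{i-j}\,\|\grad_w Q(w^\star;x_{\bsigma^k(j)})\|.
\]

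Next I would control the two contributions. Because $i\le N$, the amplification factor obeys $(1+\mu\delta)^i\le(1+\mu\delta)^N\le e^{\mu\delta N}$, which is bounded by an absolute constant $c_1=O(1)$ once $\mu\delta N$ is small enough; this is guaranteed under the step-size condition of Theorem~\ref{lemma.start.pont} (using $\nu\le\delta$). Squaring the last display, using $(a+b)^2\le 2a^2+2b^2$ followed by Jensen's inequality on the finite sum, and taking expectations, we get
\[
\Ex\|\w_i^k - w^\star\|^2 \le 2c_1^2\,\Ex\|\w_0^k - w^\star\|^2 + 2\mu^2 c_1^2\, i\,\Ex\Big[\sum_{j=1}^{i}\|\grad_w Q(w^\star;x_{\bsigma^k(j)})\|^2\Big].
\]
The crucial structural observation is that, since $\bsigma^k$ is a permutation, the indices $\bsigma^k(1),\ldots,\bsigma^k(i)$ are distinct by~\eqref{prop1}, so the inner sum is a sub-sum of $\sum_{n=1}^N\|\grad_w Q(w^\star;x_n)\|^2 = N\cK$ for \emph{every} realization; hence it is at most $N\cK$ deterministically, and, using $i\le N$, we arrive at $\Ex\|\w_i^k-w^\star\|^2\le 2c_1^2\,\Ex\|\w_0^k-w^\star\|^2 + 2c_1^2\mu^2N^2\cK$.

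Finally, taking $\limsup_{k\to\infty}$ on both sides and substituting the bound $\limsup_{k\to\infty}\Ex\|\w_0^k-w^\star\|^2\le 4\mu^2\delta^2N^2\cK/\nu^2$ from Theorem~\ref{lemma.start.pont}, the right-hand side is $O(\mu^2)$ and the estimate is uniform over $i\in\{1,\ldots,N\}$, which is precisely~\eqref{lemma.stability3}. I do not expect a genuine obstacle here: this is essentially an ``unroll within one epoch'' argument. The only two places that need care are (i) checking that $(1+\mu\delta)^N$ stays $O(1)$, which pins down how small $\mu$ must be, and (ii) invoking the without-replacement property~\eqref{prop1} so that the accumulated gradient-noise energy over an epoch is deterministically bounded by $N\cK$ rather than potentially larger --- the same sampling-without-replacement structure that drives the proof of Theorem~\ref{lemma.start.pont}.
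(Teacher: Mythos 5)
Your proof is correct, but it follows a genuinely different route from the paper's. The paper never forms a per-iterate error recursion toward $w^\star$: it writes $\Ex\|\widetilde{\w}_i^k\|^2 \le 2\Ex\|\w_i^k-\w_0^k\|^2 + 2\Ex\|\widetilde{\w}_0^k\|^2$, expands $\w_i^k-\w_0^k$ as a telescoping sum of the increments $\w_{j+1}^k-\w_j^k = -\mu\nabla_w Q(\w_j^k;x_{\bsigma^k(j+1)})$, then sums the resulting inequalities over $i=1,\dots,N-1$ and \emph{rearranges} to absorb the term $\mu^2\delta^2N^2\sum_j\Ex\|\widetilde{\w}_j^k\|^2$ back into the left-hand side --- the same self-bounding trick used in Appendix B --- under the condition $\mu^2\delta^2N^2<1$. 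You instead unroll the recursion $\|\widetilde{\w}_i^k\|\le(1+\mu\delta)\|\widetilde{\w}_{i-1}^k\|+\mu\|\nabla_w Q(w^\star;x_{\bsigma^k(i)})\|$ directly, control the amplification by $(1+\mu\delta)^N\le e^{\mu\delta N}=O(1)$ (valid since $\mu\le\nu/(3\delta^2N)$ and $\nu\le\delta$ give $\mu\delta N\le 1/3$), and bound the accumulated noise energy deterministically by $N\cK$ via the without-replacement property \eqref{prop1}. Your version buys two things: it gives an explicit bound for each individual iterate without the implicit summation-rearrangement, and it carries the gradient-at-optimum term $\|\nabla_w Q(w^\star;x_n)\|$ explicitly --- a term the paper's displayed chain silently drops when it passes from $\Ex\|\nabla_w Q(\w_j^k;x_{\bsigma^k(j)})\|^2$ to $\mu^2\delta^2\Ex\|\widetilde{\w}_j^k\|^2$ (that step as printed would require $\nabla_w Q(w^\star;x_n)=0$). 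The paper's approach, in turn, yields a bound on the whole sum $\sum_{i=1}^{N-1}\Ex\|\widetilde{\w}_i^k\|^2$ in one stroke and reuses machinery already set up for Theorem~\ref{lemma.start.pont}. Both arguments are sound and impose comparable step-size restrictions.
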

\bp
See Appendix \ref{proof.corollary.1}
\ep
\vspace{2mm}
{\color{black}
\noindent With the previous established Theorem \ref{lemma.start.pont},	it is also easy to gain the convergence theorem under decaying step-sizes.\vspace{-1mm}
\begin{corollary}[\sc Convergence under decaying step-sizes] \label{lemma.corollary.2}
	Under assumption \ref{assumption.1} and the decaying step-sizes $\mu(i) = {c}/{(i+1)}$ is employed, the iterate $\w^k_i$ converge to the minimizer $w^\star$ exactly as $i\to\infty$ with $O(1/i^2)$ rate.\vspace{-2mm}
\end{corollary}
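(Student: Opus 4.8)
The plan is to bootstrap the decaying step-size result directly from Theorem \ref{lemma.start.pont} by treating each epoch as a constant step-size run. During epoch $k$, the global iteration index ranges over $i\in\{(k-1)N+1,\dots,kN\}$, so the step-size $\mu(i)=c/(i+1)$ stays within a multiplicative factor $1+O(1/k)$ of the representative value $\mu_k\define c/(kN)$. First I would revisit the one-epoch recursion that underlies the proof of Theorem \ref{lemma.start.pont}. That argument produces a bound of the schematic form $\Ex\|\w_0^{k+1}-w^\star\|^2 \le (1-\mu_k\nu N/2)\,\Ex\|\w_0^{k}-w^\star\|^2 + b\,\mu_k^3\cK$ for some constant $b$ and every $k$ with $\mu_k\le \nu/(3\delta^2 N)$: the contraction factor $\alpha=1-\mu\nu N/2$ and the $O(\mu^2)$ fixed point of Theorem \ref{lemma.start.pont} are exactly consistent with a per-epoch contraction of order $1-\Theta(\mu_k)$ together with an additive perturbation of order $\Theta(\mu_k^3)$, since the fixed point $\Theta(\mu_k^3)/\Theta(\mu_k)=\Theta(\mu_k^2)$ matches \eqref{lemma.stability2}. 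The only care needed is to carry the per-iteration step-sizes $\mu((k-1)N+j)$, $j=1,\dots,N$, through that derivation instead of a single $\mu$; because each differs from $\mu_k$ by a $1+O(1/k)$ factor, the product of per-step contractions is still $1-\mu_k\nu N/2+o(\mu_k)$ and the resulting recursion is unchanged up to constants.

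Write $a_k\define \Ex\|\w_0^{k}-w^\star\|^2$. Substituting $\mu_k=c/(kN)$ into the recursion gives $a_{k+1}\le \big(1-\tfrac{c\nu}{2k}\big)a_k + \tfrac{b'}{k^3}$ with $b'=bc^3\cK/N^3$, valid for all $k\ge k_0$, where $k_0$ is chosen so that $\mu_{k_0}\le\nu/(3\delta^2 N)$ (such $k_0$ exists because $\mu_k\to 0$). Next I would invoke the standard Chung-type lemma for such recursions: if $a_{k+1}\le (1-c_1/k)a_k + c_2/k^{p+1}$ with $c_1>p$, then $a_k=O(1/k^p)$. Applying it with $p=2$ and $c_1=c\nu/2$ yields $a_k=O(1/k^2)$ provided $c>4/\nu$ (for smaller $c$ the same lemma gives only the slower rate $O(1/k^{c\nu/2})$, and the $O(1/i^2)$ claim is stated under the usual assumption that $c$ is large enough). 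Converting back to the global index through $k=\Theta(i/N)$ then gives $\Ex\|\w_0^k-w^\star\|^2=O(1/i^2)$.

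It remains to upgrade this from the epoch starting points to all iterates $\w_i^k$, which reuses the mechanism of Corollary \ref{lemma.corollary.1}: inside epoch $k$ the drift obeys $\Ex\|\w_i^k-\w_0^k\|^2 = O\big(\mu_k^2(\cK+\Ex\|\w_0^k-w^\star\|^2)\big)$, so that $\Ex\|\w_i^k-w^\star\|^2 \le O(\mu_k^2) + O(a_k) = O(1/k^2) = O(1/i^2)$; since $a_k\to 0$ and the within-epoch fluctuation vanishes, this also yields $\w_i^k\to w^\star$ exactly, which is the assertion of the corollary. The step I expect to be the main obstacle is the first paragraph: one must confirm that the recursion extracted from the proof of Theorem \ref{lemma.start.pont} genuinely exhibits the orders $1-\Theta(\mu_k)$ for the contraction and $\Theta(\mu_k^3)$ — cubic, not merely quadratic — for the additive term, because it is precisely this cubic perturbation together with $c_1=c\nu/2>2$ that produces the $O(1/i^2)$ rate rather than the generic $O(1/i)$ rate of vanilla SGD.
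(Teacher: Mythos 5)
Your proposal is correct and follows essentially the same route as the paper: the paper likewise takes the per-epoch recursion $\Ex\|\tw_0^{k+1}\|^2 \le \bigl(1-\tfrac{1}{2}\mu(k)N\nu\bigr)\Ex\|\tw_0^{k}\|^2 + \tfrac{2\mu(k)^3\delta^2 N^3}{\nu}\cK$ from the proof of Theorem \ref{lemma.start.pont} (so your anticipated contraction of order $1-\Theta(\mu_k)$ and cubic additive term are exactly right), substitutes $\mu(k)=c/(k+1)$, and invokes Chung's lemma to get $O(1/k^2)$ and hence $O(1/i^2)$. You are in fact somewhat more careful than the paper, which treats the step-size as constant within each epoch, does not state the requirement $c\nu/2>2$ for Chung's lemma to deliver the quadratic rate, and does not explicitly lift the bound from the epoch starting points to all iterates.
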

\bp
See Appendix \ref{proof.corollary.2}
\ep
}

\section{ILLUSTRATING BEHAVIOR AND PERIODICITY}\label{sec.illus}
In this section we illustrate the theoretical findings so far by numerical simulations. We consider the following logistic regression problem: \vspace{-3mm}
\eq{\label{xcn23bh}
\min_w\quad J(w) = \frac{1}{N}\sum_{n=1}^{N} Q(w;h_n,\gamma(n)),
}
where $h_n\in\real^M$ is the feature vector,  $\gamma(n)\in \{\pm1\}$ is the scalar label, and
\eq{\label{xn8}
Q(w;h_n,\gamma_n) \define \rho \|w\|^2 + \ln\left(1+\exp(-\gamma(n) h_n\tran w)\right).
}
The constant $\rho$ is the regularization parameter. In the first simulation, we compare the performance of the standard stochastic gradient descent (SGD) algorithm (\ref{alg:esgd-intr}) with replacement and the random reshuffling (RR) algorithm \eqref{alg.rr}. We set $N=1000$ and $M=10$. Each
$h_{n}$ is generated from the normal distribution $\cN(0; \Lambda_M)$, where $\Lambda_M$ is a diagonal matrix with each diagonal entry generated from the uniform distribution $\cU(1,10)$. To generate $\gamma(n)$, we first generate an auxiliary random vector $w_0\in \mathbb{R}^{M}$ with each entry following $\cN(0,1)$. Next, we generate $\u(n)$ from a uniform distribution $\cU(0,1)$. If $\u(n) \le 1/(1+\exp(-h_{n}\tran w_0))$ then $\gamma(n)$ is set as $+1$; otherwise $\gamma(n)$ is set as $-1$. We select $\rho=0.1$ during all simulations. Figure \ref{fig:sgd_vs_rr_1en2} illustrates {\color{black}the mean-square-deviation (MSD) performance, i.e., $\Ex\|\w_0^k-w^\star\|^2$,} of the SGD and RR algorithms when $\mu = 0.003$. It is observed that the RR algorithm oscillates during the steady-state regime, and that the MSD at the $\w_0^k$ is the best among all iterates $\{\w_i^k\}_{i=1}^{N-1}$ during epoch $k$. Furthermore, it is also observed that RR has better MSD performance than SGD. Similar observations also occur in Fig. \ref{fig:sgd_vs_rr_1en3}, where $\mu=0.0003$. It is worth noting that the gap between SGD and RR is much larger in Fig. \ref{fig:sgd_vs_rr_1en3} than in Fig. \ref{fig:sgd_vs_rr_1en2}.\vspace{-3mm}

\begin{figure}[htb]
	\centering
	\includegraphics[scale=0.35]{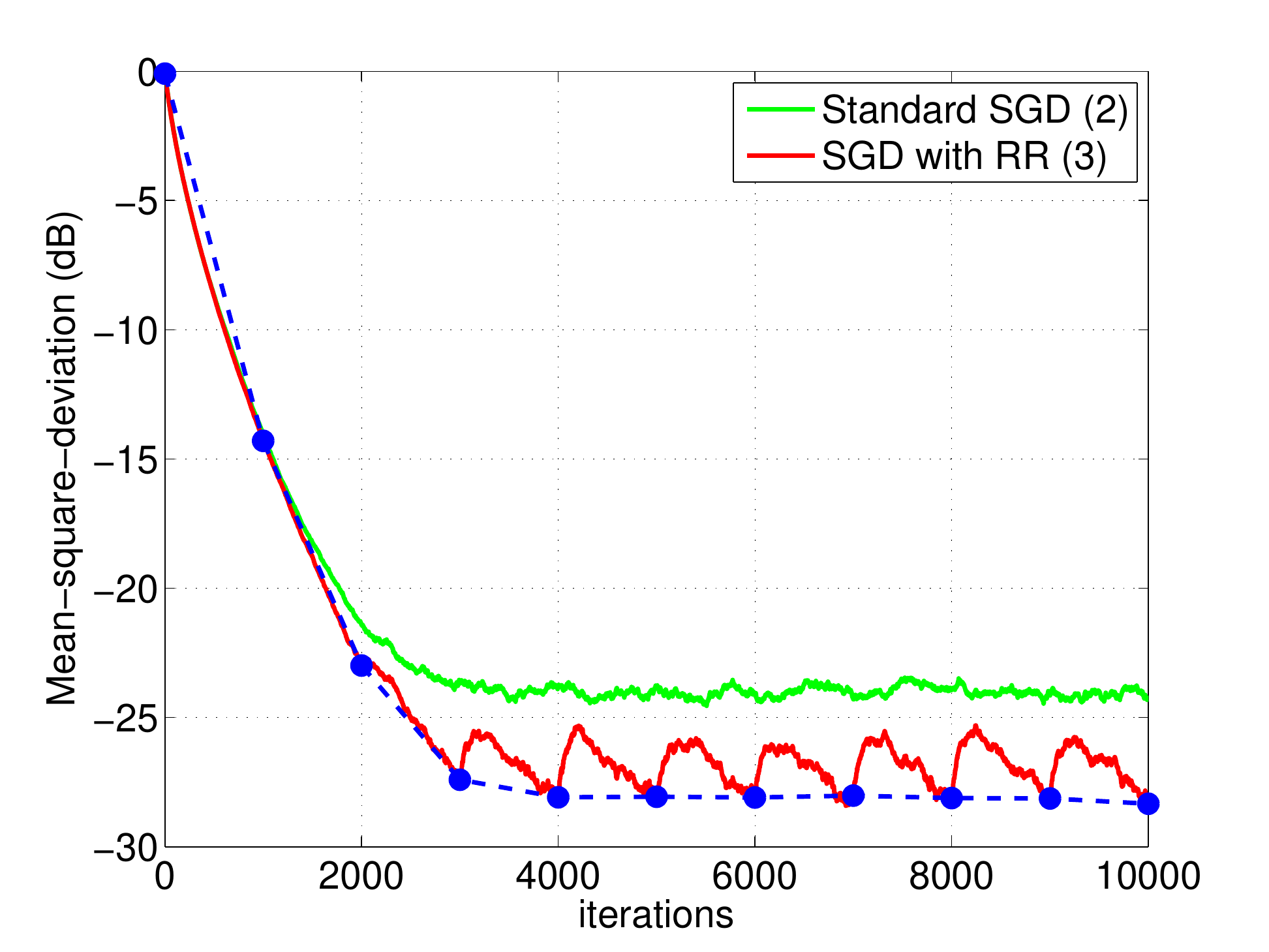}\vspace{-2mm}
	\caption{\small{RR has better mean-square-deviation (MSD) performance, i.e., $\Ex\|\w_0^k-w^\star\|^2$,  than standard SGD when $\mu = 0.003$. The dotted black curve is drawn by connecting the MSD performance at the starting points of the successive epochs.}}\vspace{-2mm}
	\label{fig:sgd_vs_rr_1en2}
\end{figure}

\begin{figure}[htb]
	\centering
	\includegraphics[scale=0.35]{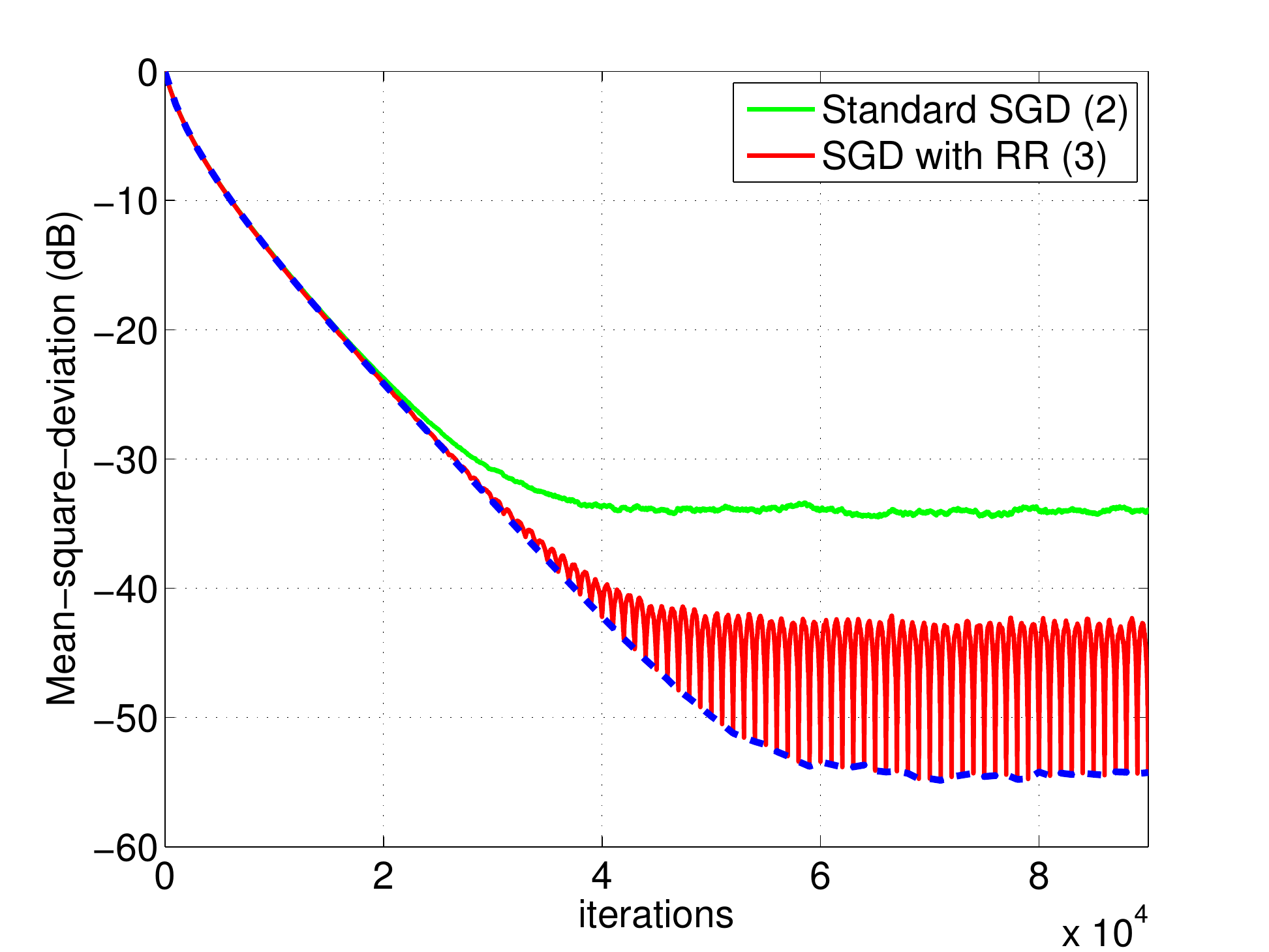}\vspace{-1mm}
	\caption{\small{RR has much better MSD performance, i.e., $\Ex\|\w_0^k-w^\star\|^2$, than standard SGD when $\mu = 0.0003$. The dotted black curve is drawn by connecting the MSD performance at the starting points of the successive epochs.}}\vspace{-2mm}
	\label{fig:sgd_vs_rr_1en3}
\end{figure}

Next, in the second simulation we verify the conclusion that the MSD for the starting point of each epoch for the random reshuffling algorithm, i.e., $\w_0^k$, can achieve $O(\mu^2)$ instead of $O(\mu)$. We still consider the regularized logistic regression problem \eqref{xcn23bh} and \eqref{xn8}, and the same experimental setting.
Recall that in Lemma \ref{lemma.start.pont}, we proved that
\begin{align}
\limsup_{k\to\infty} \Ex\|\widetilde{\w}_{0}^k\|^2
\leq&\,O(\mu^2),
\end{align}
which indicates that when $\mu$ is reduced a factor of 10, the MSD-performance $\Ex\|\widetilde{\w}_{0}^k\|^2$ should be improved by at least $20$ dB. We observe a decay of about 20dB per decade in Fig. \ref{fig:msd_vs_stepsize_20} for a logistic regression problem with $N=25$ data points and 30dB per decade in Fig. \ref{fig:msd_vs_stepsize_30} with $N=1000$.


\begin{figure}[!htb]
	\centering
	\includegraphics[scale=0.36]{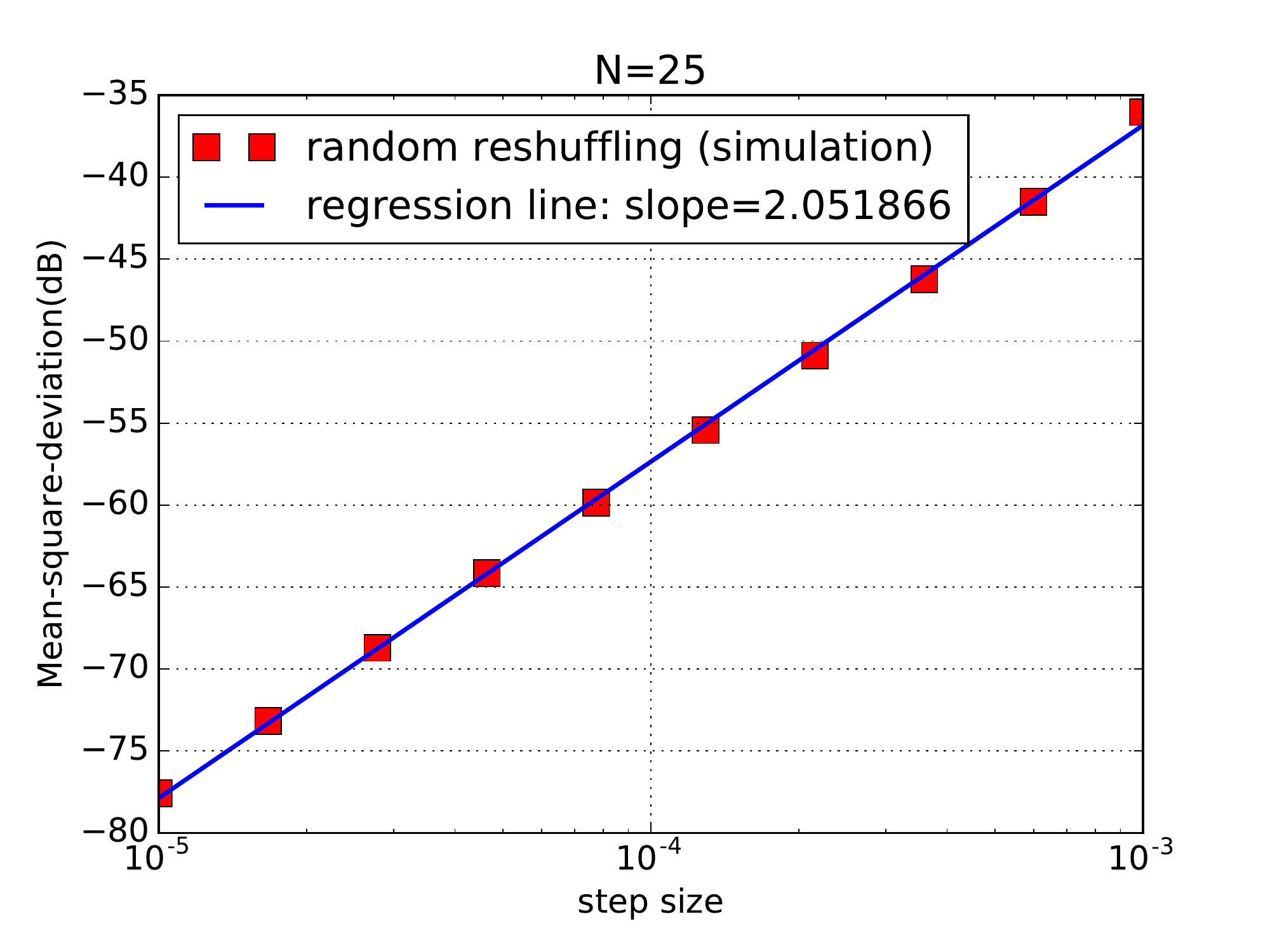}\vspace{-3mm}
	\caption{\small{Mean-square-deviation performance at steady-state versus the step size for a logistic problem involving $N=25$ data points. The slope is around $20$ dB per decade.}}\vspace{-3mm}
	\label{fig:msd_vs_stepsize_20}
\end{figure}
\begin{figure}[!htb]
	\centering
	\includegraphics[scale=0.36]{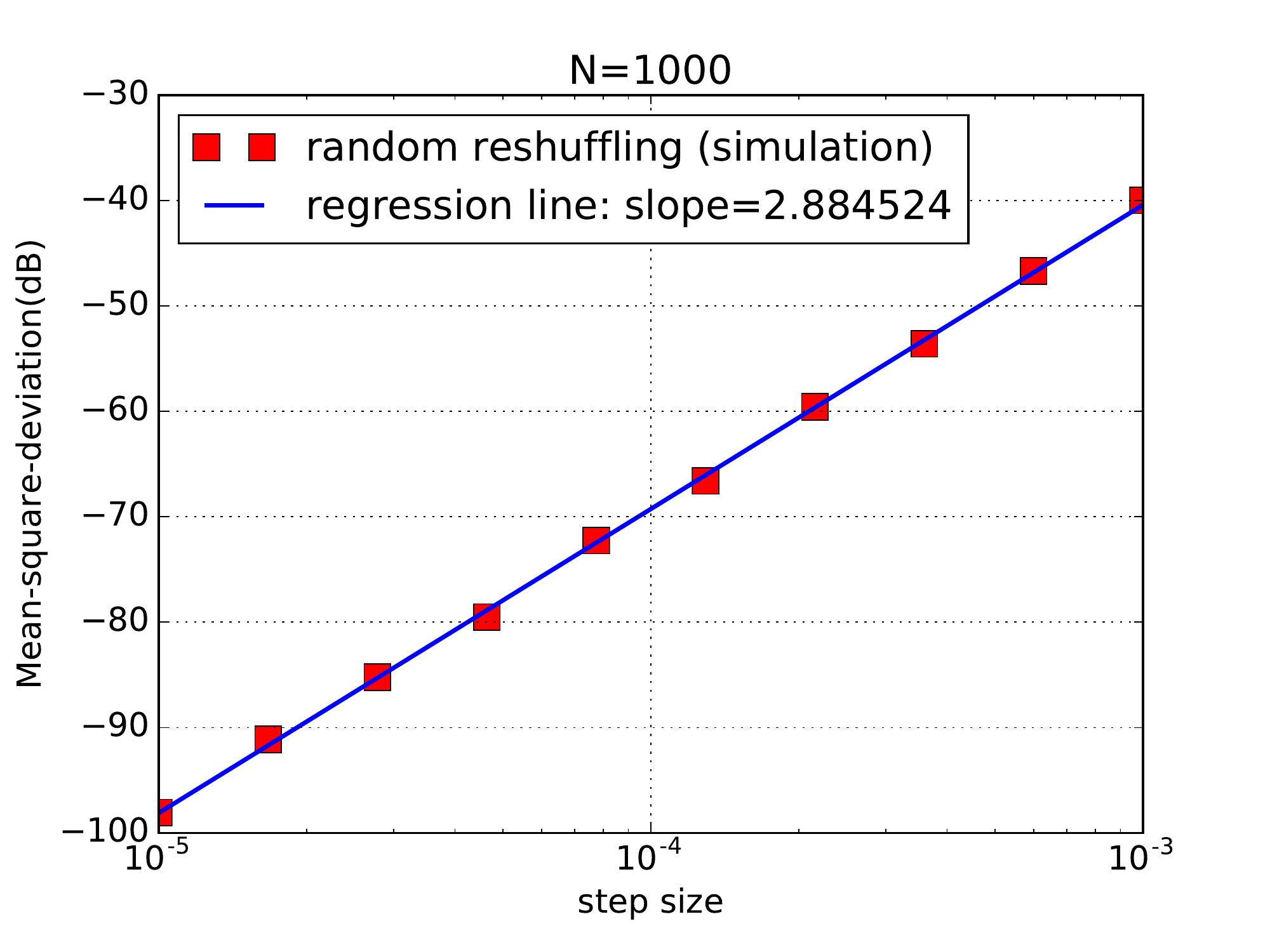}\vspace{-3mm}
	\caption{\small{Mean-square-deviation performance at steady-state versus the step size for a logistic problem involving $N=1000$ data points. The slope is around $30$ dB per decade.}}
	\label{fig:msd_vs_stepsize_30}
\end{figure}


\smallskip
\section{INTRODUCING A  LONG-TERM MODEL}
We proved in the earlier sections  that the mean-square error under random reshuffling approaches a small $O(\mu^2)-$neighborhood around the minimizer. Our objective now is  to assess more accurately  the size of the constant that multiplies $\mu^2$ in the $O(\mu^2)$ result, and examine how this constant may depend on various parameters including the amount of data, $N$, and the form of the loss function $Q$.
To do that, we proceed in two steps. First, we introduce  an auxiliary long-term model in (\ref{eq.long_term.rec}) below and subsequently determine how far the performance of this model is from the original system described by \eqref{eq.error_expand} further ahead.\vspace{-3mm}
\subsection{Error Dynamics}\label{subsec.error}
In order to quantify the performance of the random reshuffling implementation more accurately than the $O(\mu^2)-$figure obtained earlier, we will need to impose a condition on the smoothness of the Hessian matrix of the risk function.
\begin{assumption}[\sc Hessian is Lipschitz continuous]
	\label{assumption.2}
	The risk function $J(w)$ has a Lipschitz continuous Hessian matrix, i.e., there exists a constant $\kappa\geq0$, such that
	\be
	\|\nabla^2_w J(w_1) - \nabla^2_w J(w_2) \| \leq \kappa \|w_1-w_2\|
	\label{hessian.lipschitz}
	\ee\qd
\end{assumption}
\vspace{-1mm}
\noindent Under this assumption, the gradient vector, $\nabla_w J(w)$, can be expressed in Taylor expansion in the form\cite[p. 378]{ sayed2014adaptation}:
\be
\nabla_w J(w) =  \nabla^2_w J(w^\star) (w-w^\star) + \xi(w),\;\;\;\; \forall w
\ee
where the residual term satisfies:
\be
\|\xi(w)\| \leq \frac{\kappa}{2}\|w-w^\star\|^2 \label{3h98u.ni}
\ee

\noindent 
As such, we can rewrite algorithm (\ref{alg.rr}) in the form: \begin{align}
\tw_i^k =& \tw_{i-1}^k + \mu \nabla_w J(\w_{i-1}^k) \nn\\
&\;\;+ \mu\Big(\grad_w Q(\w_{i-1}^k;x_{\bsigma^k(i)})- \nabla_w J(\w_{i-1}^k)\Big)\nn\\
=& \tw_{i-1}^k - \mu  \nabla^2_w J(\w^\star) \tw_{i-1}^k + \mu\xi(\w_{i-1}^k)\nn\\
&\;\;+\mu\Big(\grad_w Q(\w_{i-1}^k;x_{\bsigma^k(i)})- \nabla_w J(\w_{i-1}^k)\Big)\label{231398u.123932}
 \end{align}
To ease the notation, we introduce the Hessian matrix $H$ and the gradient noise process:
\eq{
	H \define& \nabla^2_w J(w^\star) \nn\\
	 s_{\bsigma^k(i)}(\w_{i-1}^k)\define&\grad_w Q(\w_{i-1}^k;x_{\bsigma^k(i)})- \nabla_w J(\w_{i-1}^k)
}
so that \eqref{231398u.123932} is simplified as:
\eq{
\tw_i^k =(I-\mu H) \tw_{i-1}^k + \mu\xi(\w_{i-1}^k) + \mu s_{\bsigma^k(i)}(\w_{i-1}^k)
\label{231398u.1239}
}
Now property \eqref{zero.cross_term} motivates us to expand (\ref{231398u.1239}) into the following error recursion by adding and subtracting the same gradient noise term evaluated at $\w_0^k$:
\begin{align}
\widetilde{\w}_i^k =& (I-\mu H)\widetilde{\w}_{i-1}^k + \mu
s_{\bsigma^k(i)}(\w_{0}^k)\nn\\          \;\;&\;\;\;{}+\underbrace{\mu \big(
	s_{\bsigma^k(i)}(\w_{i-1}^k)- s_{\bsigma^k(i)}(\w_{0}^k) \big)}_{\rm noise\ mismatch} + \mu\xi(\w_{i-1}^k)
\label{eq.origin.onestep}
\end{align}
Iterating (\ref{eq.origin.onestep}) and using (\ref{bound.condtion}) we can establish the following useful relation, which we  call upon in the sequel:
\begin{align}
\widetilde{\w}_0^{k+1}
=&\;\; (I-\mu H)^N \widetilde{\w}_{0}^k + \mu \sum^N_{i=1} ( I - \mu H )^{N-i} s_{\bsigma^k(i)}(\w_0^k) \nn\\
&\;\; + \mu \sum^N_{i=1} ( I - \mu H )^{N-i} \left(s_{\bsigma^k(i)}(\w_{i-1}^k)- s_{\bsigma^k(i)}(\w_{0}^k)\right) \nn\\
&\;\; +  \mu \sum^N_{i=1} ( I - \mu H )^{N-i} \xi(\w_{i-1}^k)
\label{eq.error_expand}
\end{align}
Note that recursion (\ref{eq.error_expand}) relates $\widetilde{\w}_0^k$ to $\widetilde{\w}_0^{k+1}$, which are the starting points of two successive epochs. In this way, we have now transformed recursion (\ref{alg.rr}), which runs from one sample to another within the same epoch, into  a relation that runs from one starting point to another over two successive epochs.

To proceed, we will ignore the last two terms in (\ref{eq.error_expand}) and consider the following approximate model, which we shall refer to as a {\em long-term} model.
\be
\widetilde{\w}_0^{\prime k+1}
= (I-\mu H)^N\widetilde{\w}_{0}^{\prime k} \color{black}+\color{black} \mu\underbrace{\sum_{i=1}^{N}(I-\mu H)^{N-i} s_{\bsigma^k(i)}(\w_{0}^k)}_{\define s'(\w_0^k)} \label{eq.long_term.rec}
\ee
Obviously, the state evolution will be different than (\ref{eq.error_expand}) and is therefore denoted by the prime notation, $\widetilde{\w}_0^{\prime k}$. Observe, however, that in model (\ref{eq.long_term.rec}) the gradient noise process is still being evaluated at the original state vector, $\w_0^k$, and not at the new state vector, $\w_0^{\prime k}$.

\subsection{Performance of the Long-Term Model across Epochs}\label{subsec.perform.epoch}
Note that the gradient noise $\s'(\w_0^k)$ in (\ref{eq.long_term.rec}) has the form of a weighted sum over one epoch. This noise clearly satisfies the property:
%
%
\begin{align}
	\Ex [\,s'(\w_0^k) \,|\, \w_0^k\,]=&\; 0 \label{noise.zero}
\end{align}
We also know that  $s'(\w_0^k)$ satisfies the Markov property, i.e., it is independent of all previous $\w^{k'}_i$ and $\bsigma^{k'}(\cdot)$, where $k'<k$,  conditioned on  $\w_0^k$.
To motivate the next lemma consider the following auxiliary setting.

Assume we have a collection of $N$ vectors $\{x_i\}$ in $\real^2$ whose sum is zero. We define a random walk over these vectors in the following manner. At each time instant, we select a random vector $x_{\n_i}$ uniformly and with replacement from this set and move from the current location along the vector $x_{\n_i}$ to the next  location. If we keep repeating this construction, we obtain behavior that is represented by the right plot in Fig. 5. Assume instead that we repeat the same experiment except that now we assume the data $\{x_i\}$ is first reshuffled and then vectors $x_{\bsigma(i)}$ are selected uniformly without replacement. Because of the zero sum property, and because sampling is now performed without replacement, we find that in this second implementation we always return to the origin after $N$ selections. This situation is illustrated in the left plot of the same Fig. \ref{fig:random_walk}. The next lemma considers this scenario and provides useful expressions that allow us to estimate the expected location after $1, 2$ or more (unitl $N-1$) movements. These results will be used in the sequel in our analysis of the performance of stochastic learning under RR.

\begin{figure}[htb]
	\centering
	\includegraphics[scale=0.35]{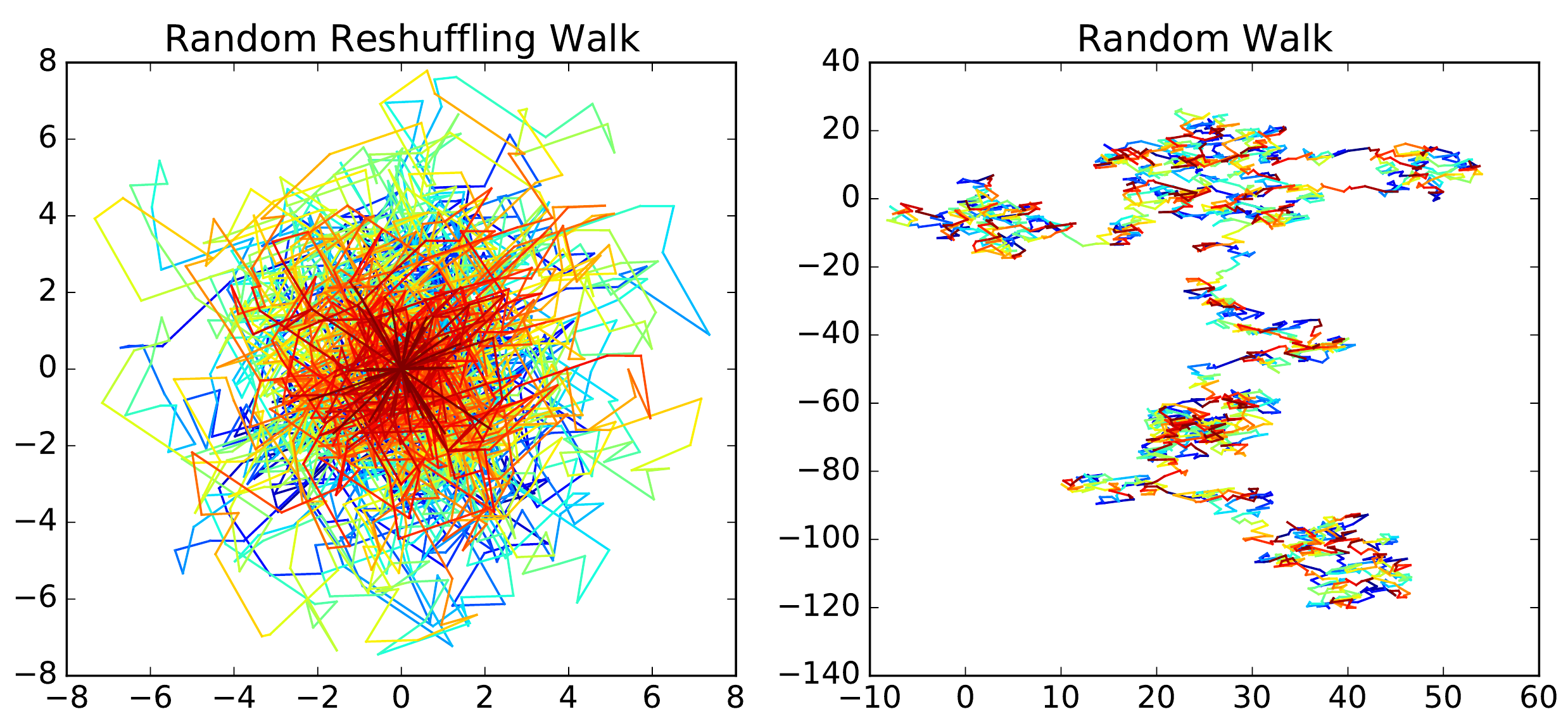}
	\caption{\small{Random walk versus Random reshuffling walk. The lines with same color represent all $i$-th choices walk in different epochs. }}
	\label{fig:random_walk}
\end{figure}

\begin{lemma} \label{lemma.2}
Suppose we have a set of $N$ vectors $X = \{x_i\}^N_{i=1}$ with the constraint
\(
\sum_{i=1}^{N} x_i =  0
\). Assume the elements of $X$ are randomly reshuffled and then selected uniformly without replacement. Let $\beta$ be any nonnegative constant, $B$ be any symmetric positive semi-definite matrix, and introduce  \eq{
	R_x\define&\frac{1}{N}\sum_{i=1}^N x_ix_i\tran\\
	{\rm Var}(X) \define& \frac{1}{N} \sum_{i=1}^N\|x_i\|^2 = \Tr(R_x)
}
 Define the  following functions for any $1\leq n\leq N$:
\eq{
f(n;X,\!\beta) \define& \Ex \left\|\sum_{j=1}^{n}\beta^{n-j} x_{\bsigma(j)}\right\|^2\\
F(n;X,B) \define & \Ex \left[\sum_{j=1}^nB^{n-j} x_{\bsigma(j)}\right]\left[\sum_{j=1}^n x_{\bsigma(j)}\tran B^{n-j}\right]
}
It then holds that \eq{
f(n;X,\!\beta)=&\, \frac{(\sum_{i=0}^{n-1}\beta^{2i})N-(\sum_{i=0}^{n-1}\beta^i)^2}{N-1}{\rm Var}(X) \label{eq.lemma2}\\
	F(n;X,B)=& \frac{\left[\displaystyle\sum_{i=0}^{n-1}B^i R_xB^i\right]\!\!N \!-\! \left[\displaystyle\sum_{i=0}^{n-1}B^i\right] \!R_x \!\left[\displaystyle\sum_{i=0}^{n-1}B^i\right] }{N-1}
	\label{eq.lemma2.2}
}
\end{lemma}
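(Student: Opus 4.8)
The plan is to expand both quadratic forms, use the exchangeability of the uniform permutation $\bsigma$ to reduce everything to the first- and second-order statistics of pairs of selected vectors, and then invoke the zero-sum constraint $\sum_{i=1}^N x_i = 0$ to collapse the cross terms. The algebra is elementary once the expectations of the pairwise products are in hand.

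For $f(n;X,\beta)$ I would first write $\big\|\sum_{j=1}^n \beta^{n-j} x_{\bsigma(j)}\big\|^2 = \sum_{j,l=1}^n \beta^{2n-j-l}\, x_{\bsigma(j)}\tran x_{\bsigma(l)}$ and split into the diagonal part $j=l$ and the off-diagonal part $j\neq l$. By the uniformity property \eqref{prop2}, $\bsigma(j)$ is uniform on $\{1,\dots,N\}$ for each fixed $j$, so $\Ex\|x_{\bsigma(j)}\|^2 = {\rm Var}(X)$; and for $j\neq l$ the pair $(\bsigma(j),\bsigma(l))$ is uniform over the $N(N-1)$ ordered pairs of distinct indices (immediate from \eqref{prop1}--\eqref{prop3}, or from exchangeability of a uniform random permutation), so $\Ex[x_{\bsigma(j)}\tran x_{\bsigma(l)}] = \frac{1}{N(N-1)}\big(\|\sum_i x_i\|^2 - \sum_i\|x_i\|^2\big) = -\frac{{\rm Var}(X)}{N-1}$ after using $\sum_i x_i = 0$. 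Substituting, reindexing by $i=n-j$, and using the identity $\sum_{j\neq l} c_j c_l = (\sum_j c_j)^2 - \sum_j c_j^2$ with $c_j = \beta^{n-j}$ gives $f(n;X,\beta) = {\rm Var}(X)\Big[\sum_{i=0}^{n-1}\beta^{2i} - \tfrac{1}{N-1}\big((\sum_{i=0}^{n-1}\beta^i)^2 - \sum_{i=0}^{n-1}\beta^{2i}\big)\Big]$, which collapses over the common denominator $N-1$ to \eqref{eq.lemma2}.

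The matrix identity $F(n;X,B)$ follows the same route with $x_{\bsigma(j)}\tran x_{\bsigma(l)}$ replaced by $x_{\bsigma(j)} x_{\bsigma(l)}\tran$ sandwiched between $B^{n-j}$ on the left and $B^{n-l}$ on the right. The diagonal terms contribute $\sum_{j=1}^n B^{n-j} R_x B^{n-j} = \sum_{i=0}^{n-1} B^i R_x B^i$, while for $j\neq l$ the same zero-sum computation gives $\Ex[x_{\bsigma(j)} x_{\bsigma(l)}\tran] = -R_x/(N-1)$; summing the off-diagonal block as $\big(\sum_{i=0}^{n-1}B^i\big) R_x \big(\sum_{i=0}^{n-1}B^i\big)$ minus the diagonal block and collecting over $N-1$ yields \eqref{eq.lemma2.2}, with symmetry of $B$ ensuring $\sum_i B^i$ is symmetric so the stated form is consistent. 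I do not anticipate a real obstacle; the only point requiring care is the bookkeeping for the off-diagonal pair distribution and the fact that $\beta\ge 0$ and $B\succeq 0$ are not actually needed for the identities themselves — they matter only for how the lemma is invoked later.
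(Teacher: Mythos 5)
Your proof is correct, and it takes a genuinely different route from the paper's. The paper proves both identities by induction on $n$: it conditions on $\bsigma(1{:}n)$, uses the sequential sampling-without-replacement property \eqref{prop3} together with the law of total expectation to evaluate the cross term $\Ex\big[\big(\sum_{j\le n}\beta^{n-j}x_{\bsigma(j)}\big)\tran x_{\bsigma(n+1)}\big]$, and then folds the recursion $f(n+1)=\beta^2 f(n)+{\rm Var}(X)-\tfrac{2}{N-1}\sum_{j=1}^n\beta^j\,{\rm Var}(X)$ back into the closed form (and analogously for $F$). You instead expand the square directly and reduce everything to the two marginal facts that $\bsigma(j)$ is uniform and that $(\bsigma(j),\bsigma(l))$ for $j\neq l$ is uniform over ordered pairs of distinct indices, so that $\Ex[x_{\bsigma(j)}\tran x_{\bsigma(l)}]=-{\rm Var}(X)/(N-1)$ and $\Ex[x_{\bsigma(j)}x_{\bsigma(l)}\tran]=-R_x/(N-1)$ by the zero-sum constraint; the identity $\sum_{j\neq l}c_jc_l=(\sum_j c_j)^2-\sum_j c_j^2$ then delivers both formulas in one pass. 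Your route is more elementary and arguably more transparent about \emph{why} the $N/(N-1)$ structure appears (it is exactly the negative correlation of sampling without replacement), and it dispenses with induction; the paper's route has the advantage of reusing the conditional property \eqref{prop3} that is already set up for the rest of the analysis. Your closing remark is also accurate: neither $\beta\ge 0$ nor $B\succeq 0$ is used in establishing the identities themselves --- they matter only for the later invocations with $\beta=1-\mu\nu$ and $B=I-\mu H$.
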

\begin{proof}
	The proof is provided in Appendix \ref{app.noise}.
\end{proof}\vspace{2mm}

We now return to the stochastic gradient implementation under random reshuffling. Recall from \eqref{grad.define} that the stochastic gradient satisfies the zero sample mean property so that
\eq{
\sum_{i=1}^N s_{\bsigma^k(i)}(\w) = 0
}
at any given point $\w$. Applying Lemma \ref{lemma.2}, we readily conclude that
\begin{align}
&\hspace{-5mm}\Ex [s'(\w_0^k)s'(\w_0^k)\tran\,|\,\w_0^k]\nn\\
=&\; \frac{N\left(\sum_{i=0}^{N-1} (I-\mu H)^iR_s^k(I-\mu H)^i\right)}{N-1}\nn\\
&\;\; {}- \frac{\big[\sum_{i=0}^{N-1}(I-\mu H)^i\big] R_s^k \big[\sum_{i=0}^{N-1}(I-\mu H)^i\big]}{N-1} \label{agg.noise.result}
\end{align}
where
\be
	R_s^k \define\frac{1}{N} \sum_{n=1}^N \s_n(\w_0^k) \s_n(\w_0^k)\tran
\ee
Similarly, we conclude for the gradient noise at the optimal $w^\star$:
\eq{
	R_s^{\prime \star}\define&\Ex [s'(\w^\star)s'(\w^\star)\tran]\nn\\
	=&\; \frac{N\left(\sum_{i=0}^{N-1} (I-\mu H)^iR^\star_s(I-\mu H)^i\right)}{N-1}\nn\\
	&\; {}- \frac{\big[\sum_{i=0}^{N-1}(I-\mu H)^i\big] R^{ \star}_s \big[\sum_{i=0}^{N-1}(I-\mu H)^i\big]}{N-1} \label{eq.r_s.star}
}
where \eq{	R^\star_s = &\; \frac{1}{N}\sum_{i=0}^N \nabla Q(w^{\star}; x_i)\nabla Q(w^{\star}; x_i)\tran}
\begin{theorem}[\sc Performance of Long-term Model across Epochs] \label{main.theorem}
	Under assumptions \ref{assumption.1} and \ref{assumption.2}, when the step size $\mu$ is sufficiently small, {\color{black} namely, for $\mu\leq{1}/{\delta}$,} the mean-square-deviation (MSD) of the long term model (\ref{eq.long_term.rec}) is given by
	\begin{align}
		{\rm MSD}_{\rm RR}^{\rm lt}\define& \limsup_{k\to\infty}\| \w^{\prime k}_0 - w^\star\|^2\nn\\
		=\,& \mu^2 \Tr\left((I- (I-\mu H)^{2N})^{-1}R^{\prime \star}_s\right)+ O(\mu^4)\label{eq.msd.reshuffle}
	\end{align}
	{\color{black} The convergence to steady-state regime occurs at an exponential rate, dictated by the parameter:
		\eq{
			\alpha \define  (1 - \mu \lambda_{\min}(H))^{2N} \approx 1 - 2\mu \lambda_{\min}(H) N
	}}\vspace{-5mm}
\end{theorem}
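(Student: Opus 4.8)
The plan is to read \eqref{eq.long_term.rec} as a linear, affinely‑driven recursion across epochs, $\widetilde{\w}_0^{\prime k+1} = A\,\widetilde{\w}_0^{\prime k} + \mu\, s'(\w_0^k)$ with $A\define(I-\mu H)^N$, and to propagate its second‑order statistics. The first step is to pin down $A$. Since $J$ is $\nu$‑strongly convex and $\delta$‑smooth, the Hessian at the minimizer obeys $\nu I \preceq H \preceq \delta I$; hence for $\mu\le 1/\delta$ we have $0 \preceq I-\mu H \preceq (1-\mu\nu)I$, so $A$ is symmetric with $\|A\| = \lambda_{\max}\!\left((I-\mu H)^N\right) = (1-\mu\lambda_{\min}(H))^N<1$, and $I-(I-\mu H)^{2N}$ is invertible with $\left(I-(I-\mu H)^{2N}\right)^{-1} = \tfrac{1}{2N\mu}H^{-1}\left(I+O(\mu)\right)=O(1/\mu)$.

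Next I would track the covariance $W_k \define \Ex\!\left[\widetilde{\w}_0^{\prime k}(\widetilde{\w}_0^{\prime k})\tran\right]$. Conditioning on the $\sigma$‑field $\cF_k$ generated by the initialization and the permutations $\bsigma^1,\dots,\bsigma^{k-1}$, the iterate $\widetilde{\w}_0^{\prime k}$ is $\cF_k$‑measurable (a deterministic function of $\w_0^1,\dots,\w_0^{k-1}$ and $\bsigma^1,\dots,\bsigma^{k-1}$), whereas $\Ex[s'(\w_0^k)\mid\cF_k]=0$ by \eqref{noise.zero} together with the stated Markov property; therefore the cross term vanishes and $W_{k+1}=A W_k A\tran + \mu^2 G_k$ with $G_k\define\Ex\!\left[s'(\w_0^k)(s'(\w_0^k))\tran\right]$. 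Iterating this Lyapunov‑type recursion and using $\|A\|<1$ gives $W_k = A^kW_0(A\tran)^k + \mu^2\sum_{j=0}^{k-1}A^jG_{k-1-j}(A\tran)^j$; the first term vanishes geometrically, and once $G_k$ has settled to a value $G_\infty$ (see below) the cyclic property of the trace and symmetry of $A$ yield $\Tr(W_k)\to\mu^2\sum_{j\ge0}\Tr\!\left(A^{2j}G_\infty\right) = \mu^2\Tr\!\left((I-(I-\mu H)^{2N})^{-1}G_\infty\right)$, which is already the structural form of \eqref{eq.msd.reshuffle}; it remains to identify $G_\infty$.

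Taking the conditional expectation in \eqref{agg.noise.result} shows $G_k=\Phi\!\left(\Ex[R_s^k]\right)$, where $\Phi$ is precisely the aggregation map that produced $R_s^{\prime\star}$ in \eqref{eq.r_s.star}. The key algebraic fact — the matrix version of the ``return‑to‑the‑origin'' cancellation behind Lemma \ref{lemma.2} — is that, writing $B=I-\mu H$, one has the identity $(N-1)\Phi(R)=N\sum_i B^iRB^i - (\sum_iB^i)R(\sum_iB^i)=\tfrac12\sum_{i,j=0}^{N-1}(B^i-B^j)R(B^i-B^j)$, so that $\|\Phi(R)\|=O(\mu^2N^3\|H\|^2\|R\|)$ because $\|B^i-B^j\|=O(\mu|i-j|\|H\|)$ and $\|B\|\le1$; in other words $\Phi$ has gain $O(\mu^2)$ (and as a byproduct $R_s^{\prime\star}=\Phi(R_s^\star)=O(\mu^2)$, so the leading term of \eqref{eq.msd.reshuffle} is itself $O(\mu^3)$, consistent with Theorem \ref{lemma.start.pont}). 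Now $\Ex[R_s^k]=R_s^\star+\Delta_k$, and expanding $s_n(\w_0^k)s_n(\w_0^k)\tran$ about $s_n(w^\star)s_n(w^\star)\tran$ using the $\delta$‑Lipschitz gradient bound of Assumption \ref{assumption.1} gives $\|\Delta_k\| = O(\Ex\|\w_0^k-w^\star\|)+O(\Ex\|\w_0^k-w^\star\|^2)$; invoking Theorem \ref{lemma.start.pont} and Jensen's inequality, $\limsup_k\|\Delta_k\|=O(\mu)$, hence $G_k=\Phi(R_s^\star)+\Phi(\Delta_k)=R_s^{\prime\star}+O(\mu^3)$ for all large $k$. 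Substituting into the trace expression and using $(I-(I-\mu H)^{2N})^{-1}=O(1/\mu)$ turns the $O(\mu^3)$ discrepancy into an $O(\mu^4)$ remainder, which establishes \eqref{eq.msd.reshuffle}. The exponential‑rate claim comes from the same iterated recursion: the homogeneous term $A^kW_0(A\tran)^k$ contracts per epoch by $\|A\|^2=(1-\mu\lambda_{\min}(H))^{2N}=\alpha\approx1-2\mu\lambda_{\min}(H)N$, and the residual variation of $G_k$ about $R_s^{\prime\star}$ is controlled by the (comparable or faster) convergence already proved in Theorem \ref{lemma.start.pont}.

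I expect the crux to be the third paragraph — showing $G_\infty=R_s^{\prime\star}+O(\mu^3)$ with the correct power of $\mu$. This rests on two quantitative points that must be handled carefully: (i) the aggregation map $\Phi$ amplifies its argument by only $O(\mu^2)$, not $O(\mu)$ — this is exactly the cancellation that makes a reshuffled epoch return (nearly) to the origin, and it needs the symmetrized representation above together with $\|I-\mu H\|\le1$; and (ii) the instantaneous noise covariance $\Ex[R_s^k]$ deviates from its value $R_s^\star$ at $w^\star$ by only $O(\mu)$, which is where Theorem \ref{lemma.start.pont} is indispensable — with only the $O(\mu)$ mean‑square bound of uniform sampling one would get an $O(\mu^{7/2})$ (or worse) remainder instead. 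Combining (i) and (ii) with the $O(1/\mu)$ growth of $(I-(I-\mu H)^{2N})^{-1}$ is what pins the remainder at $O(\mu^4)$; the remaining pieces — the vanishing cross term, the geometric summation, and the trace identity — are routine once $\|A\|<1$ is in hand.
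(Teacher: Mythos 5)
Your proposal is correct, and its overall architecture coincides with the paper's proof in Appendix \ref{app.main.theorem}: a linear cross-epoch recursion with coefficient $(I-\mu H)^N$, a cross term killed by \eqref{noise.zero}, a Lyapunov-type accumulation producing the factor $(I-(I-\mu H)^{2N})^{-1}$, and a replacement of the noise covariance at $\w_0^k$ by its value at $w^\star$ with the mismatch pushed into the $O(\mu^4)$ remainder. The differences are worth noting. First, you propagate the covariance matrix $W_k$ and sum the geometric series directly, whereas the paper transforms to the eigenbasis of $H$, works with a weighted norm $\|\cdot\|^2_\Sigma$, and chooses $\Sigma$ to solve the Lyapunov equation \eqref{8hj9g.3}; these are dual formulations of the same computation. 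Second, and more substantively, for the remainder the paper's Appendix \ref{g892.gf3} expands $(I-\mu H)^i-(I-\mu H)^j$ binomially and relies on the cancellation $\sum_{i,j}(j-i)=0$ to show the aggregated-covariance difference is $O(\mu^2)\widetilde{R}_s^k H+\widetilde{R}_s^k O(\mu^2)$, then separately proves $\Ex\|\widetilde{R}_s^k\|=O(\mu)$. Your symmetrized identity $(N-1)\Phi(R)=\tfrac12\sum_{i,j}(B^i-B^j)R(B^i-B^j)$ with $\|B^i-B^j\|=O(\mu|i-j|)$ reaches the same $O(\mu^2)$ gain for the aggregation map in one step, and as a by-product makes explicit that $R_s^{\prime\star}=\Phi(R_s^\star)=O(\mu^2)$, i.e., that the leading term of \eqref{eq.msd.reshuffle} is itself $O(\mu^3)$, consistent with \eqref{23gniox}. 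Both routes invoke Theorem \ref{lemma.start.pont} at the identical spot --- to get the $O(\mu)$ deviation of the instantaneous noise covariance from $R_s^\star$ --- and both then combine the resulting $O(\mu^3)$ covariance mismatch with the $O(1/\mu)$ size of the inverse and the $\mu^2$ prefactor to land at $O(\mu^4)$. Your version is, if anything, slightly cleaner on that technical step; nothing in it is missing or incorrect.
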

\begin{proof}See Appendix \ref{app.main.theorem}. 
\end{proof}


The simulations in Fig. \ref{fig:rr_lr} show that the MSD expression (\ref{eq.msd.reshuffle}) fits well the performance of the original random reshuffling algorithm. We will establish this fact analytically in the sequel. For now, the simulation is simply confirming that the performance of the long-term model is a good indication of the performance of the original stochastic gradient implementation under RR. \vspace{-3mm}
\begin{figure}[htb]
	\centering
	\includegraphics[scale=0.37]{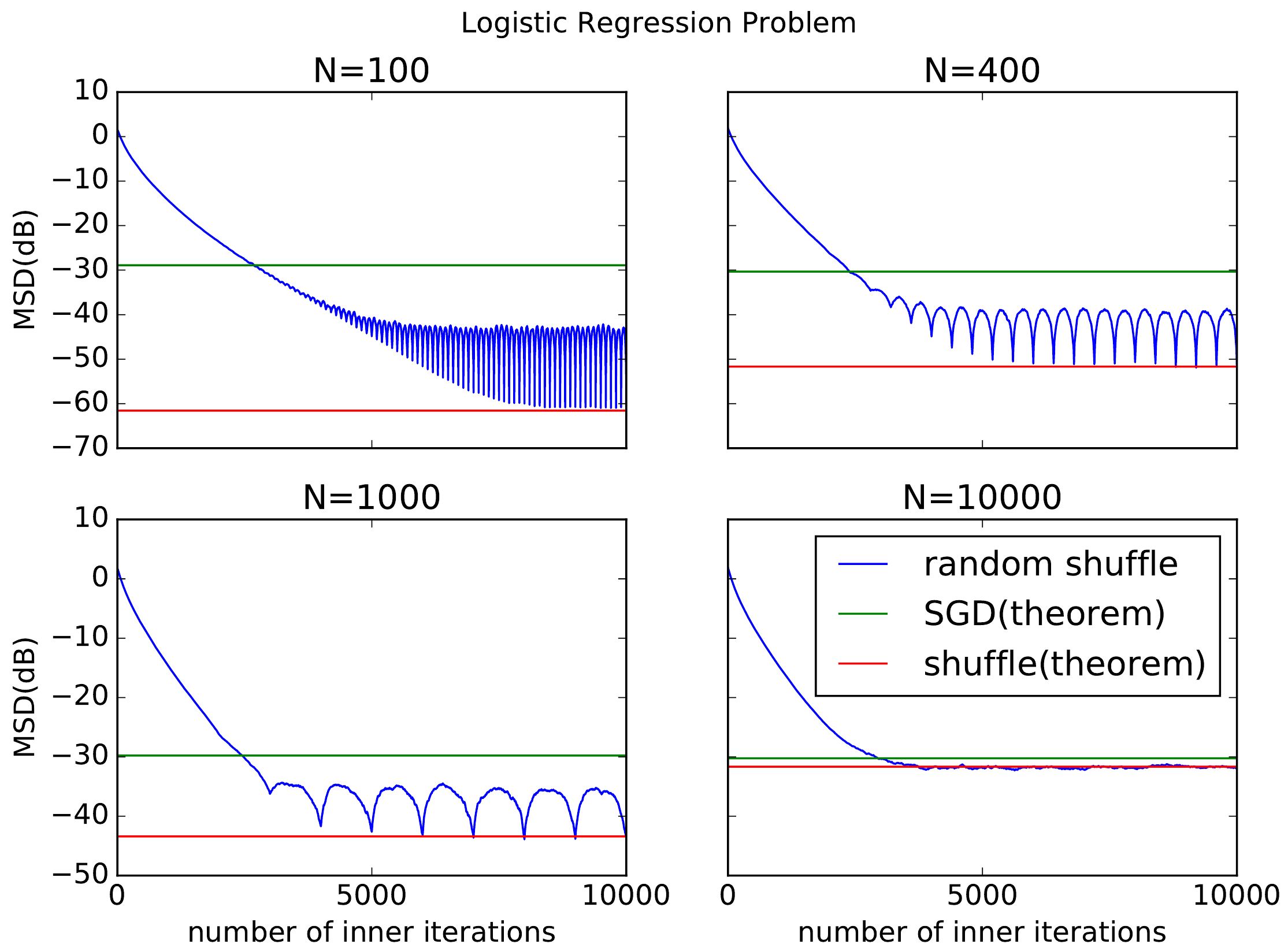}\vspace{-3mm}
	\caption{\small{Mean-square-deviation perfromance of random reshuffling algorithm curve on least-mean-square cost function}}\vspace{-3mm}
	\label{fig:rr_lr}
\end{figure}
\subsection{Performance of the Long-Term Model over Iterations} \label{subsec.perform.time}
In the previous section we examined the performance of the long-term model at the starting points of successive epochs. In this section, we examine the performance of the same model at any iterate $\w_i^k$ as time approaches $\infty$. This analysis will help explain the oscillations that are observed in the learning curves in the simulations. {\color{black} First, similar to \eqref{eq.r_s.star}, we need to determine the covariance matrix $R^{\prime\star}_{s,i}$ for any $i$. From Lemma \ref{lemma.2}, we immediately get that }
\eq{
	R^{\prime\star}_{s,i}\define&\Ex s_i'(w^\star) s_i'(w^\star)\tran \nn\\
	=&\; \frac{N\left(\sum_{j=0}^{i-1} (I-\mu H)^jR_s^\star(I-\mu H)^j\right)}{N-1}\nn\\
	&\;\; {}- \frac{\big[\sum_{j=0}^{i-1}(I-\mu H)^j\big] R_s^\star \big[\sum_{j=0}^{i-1}(I-\mu H)^j\big]}{N-1} 
}
{\color{black}
\begin{theorem}[\sc Performance\! Upper-bound\! for\! Long- Term Model]\label{theorem.iterations}
	Under assumptions \ref{assumption.1} and \ref{assumption.2}, when the step size $\mu$ satisfies $\mu\leq\frac{2}{\delta+\nu}$, the upper-bound of mean-square-deviation (MSD) of the long term model (\ref{eq.long_term.rec}) at all iterations is given by
	\eq{
		&\hspace{-8mm}\lim_{k\to\infty}\Ex\|\widetilde{\w}_{i}^{\prime k}\|^2\nn\\
		\leq&(1-\mu\nu)^{2i}\mu^2\Tr\left(\big(I-(I-\mu H)^{2N}\big)^{-1}  R_{s}^{\prime \star}\right)\nn\\
		&	 +\Big(1\!-\!(1\!-\!\mu\nu)^{2i}\Big)\mu^2\Tr\left(\big(I\!-\!(I\!-\!\mu\nu)^{2i}\big)^{-1}  R_{s,i}^{\prime \star}\right) \label{2389g.cs}\\
		\define& \eta_i {\rm MSD}_{\rm RR}^{\rm lt} + (1-\eta_i) {\rm MSD}_{\rm RR,i}^{\rm lt} \label{23x.gewsd}
	}
\end{theorem}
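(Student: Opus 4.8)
The plan is to run the long-term recursion \eqref{eq.long_term.rec} forward from the start of an epoch through $i$ inner steps, isolate the contribution of the starting error $\widetilde{\w}_0^{\prime k}$ from the fresh gradient-noise accumulated within the current epoch, and bound each separately at steady-state. Concretely, iterating \eqref{eq.origin.onestep} — but applied to the long-term state, so that only the noise term $\mu s_{\bsigma^k(j)}(\w_0^k)$ and no mismatch/$\xi$ terms survive — gives
\eq{
\widetilde{\w}_i^{\prime k} = (I-\mu H)^i \widetilde{\w}_0^{\prime k} + \mu\sum_{j=1}^i (I-\mu H)^{i-j} s_{\bsigma^k(j)}(\w_0^k).
}
Because $s'(\w_0^k)$ satisfies $\Ex[\,s'(\w_0^k)\,|\,\w_0^k\,]=0$ and the Markov property noted after \eqref{noise.zero}, and because $\widetilde{\w}_0^{\prime k}$ itself is built only from noise of epochs $<k$, the cross term vanishes in expectation and
\eq{
\Ex\|\widetilde{\w}_i^{\prime k}\|^2 = \Ex\big\|(I-\mu H)^i\widetilde{\w}_0^{\prime k}\big\|^2 + \mu^2\,\Ex\Big\|\textstyle\sum_{j=1}^i(I-\mu H)^{i-j} s_{\bsigma^k(j)}(\w_0^k)\Big\|^2.
}

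For the first term I would use $\|(I-\mu H)^i v\|^2 \le (1-\mu\nu)^{2i}\|v\|^2$ (valid since $\mu\le \tfrac{2}{\delta+\nu}$ makes $I-\mu H$ a contraction with spectral radius $\le 1-\mu\nu$), and then pass to the limit $k\to\infty$, substituting the steady-state value $\limsup_k \Ex\|\widetilde{\w}_0^{\prime k}\|^2 = {\rm MSD}_{\rm RR}^{\rm lt} = \mu^2\Tr\big((I-(I-\mu H)^{2N})^{-1}R_s^{\prime\star}\big)$ from Theorem \ref{main.theorem}. This yields the first summand $(1-\mu\nu)^{2i}{\rm MSD}_{\rm RR}^{\rm lt}$. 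For the second term, the weighted noise sum over the first $i$ samples of the epoch is exactly the object handled by Lemma \ref{lemma.2}: $\Ex[\,s_i'(\w_0^k)s_i'(\w_0^k)\tran\,|\,\w_0^k] = F(i;\,\{s_n(\w_0^k)\},\,I-\mu H)$, whose limiting value (replacing $R_s^k$ by $R_s^\star$ as $\w_0^k\to w^\star$, up to the $O(\mu^4)$-type corrections already absorbed in Theorem \ref{main.theorem}) is $R_{s,i}^{\prime\star}$. So the second summand is $\mu^2\Tr(R_{s,i}^{\prime\star})$ in the limit.

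The remaining step is to recognize that $\mu^2\Tr(R_{s,i}^{\prime\star})$ can be rewritten as $\big(1-(1-\mu\nu)^{2i}\big)\mu^2\Tr\big((I-(I-\mu\nu)^{2i})^{-1}R_{s,i}^{\prime\star}\big)$, i.e. multiply and divide by the scalar $1-(1-\mu\nu)^{2i}$; this is an identity, not an inequality, and serves only to cast the bound in the "geometric-mixture" form \eqref{23x.gewsd} with mixing weight $\eta_i = (1-\mu\nu)^{2i}$, mirroring the structure of ${\rm MSD}_{\rm RR}^{\rm lt}$. Combining the two pieces gives \eqref{2389g.cs}.

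The main obstacle is handling the passage to the limit cleanly: $\widetilde{\w}_0^{\prime k}$ and the within-epoch noise covariance $R_s^k$ both depend on $\w_0^k$, which is only converging to a neighborhood of $w^\star$, so one must argue that replacing $R_s^k$ by $R_s^\star$ costs only higher-order-in-$\mu$ terms — this is where Assumption \ref{assumption.2} (Lipschitz Hessian) and the $O(\mu^2)$ stability bound from Theorem \ref{lemma.start.pont} are needed, exactly as in the proof of Theorem \ref{main.theorem}. A secondary technical point is justifying that the cross term truly vanishes after taking $\limsup$ rather than just in a single-epoch conditional expectation; this follows from the tower rule together with the Markov/independence structure of $s'$, and is the same mechanism already invoked for Theorem \ref{main.theorem}, so I would simply cite that argument rather than repeat it.
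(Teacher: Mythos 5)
Your proposal is correct and follows essentially the same route as the paper's own proof in Appendix~\ref{app.theorem.iterations}: the same expansion of the long-term iterate into a contracted starting error plus a within-epoch weighted noise sum, the vanishing cross term via the zero conditional mean of $s_i'(\w_0^k)$, the contraction bound $\|I-\mu H\|\leq 1-\mu\nu$, Lemma~\ref{lemma.2} for the noise covariance, and the final multiply-and-divide identity by $1-(1-\mu\nu)^{2i}$. The technical caveats you flag (replacing $R_s^k$ by $R_s^\star$ at $O(\mu^4)$ cost, citing the Theorem~\ref{main.theorem} machinery) are handled identically in the paper.
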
	
\bp See Appendix \ref{app.theorem.iterations}.\ep
}

We need to point out unlike that \eqref{eq.msd.reshuffle}, expression \eqref{89n.sdg} is an upper-bound rather than an actual performance expression. Still, this bound can help provide useful insights on the periodic behavior that is observed in the simulations. The expression \eqref{2389g.cs} on the right-hand side is a convex combination of two performance measures as defined in \eqref{23x.gewsd}, where the second term is always larger than the first term but approaching it as $i$ increases towards $N$. This behavior will  become clearer later in the context of an example and the hyperbolic representation in section \ref{subsec.hyper}.

{\color{black} 
	Before we continue, we would like to comment on the convergence curve under random reshuffling. Unlike the convergence curve under uniform sampling, we observe periodic fluctuations under random reshuffling in Figures \ref{fig:sgd_vs_rr_1en3} and \ref{fig:rr_lr}. The main reason for this behavior is the fact that the gradient noise is no longer i.i.d. in steady-state. Specifically, the noise variance is now a function of the iterate and it assumes its lowest value at the beginning and end of every epoch. In lemma \ref{lemma.2}, we show that the variance of the random walk process resulting from random reshuffling at each iteration $n$ in Eq. \eqref{eq.lemma2}. We plot the function for $N=20$ and ${\rm Var}(x)=1$ in Fig.~\ref{fig:var_func}. Since the mean-square performance of the algorithm is related to the variance of the gradient noise, it is expected that this bell-shape behavior will be reflected in to the MSD curve as well, thus, resulting in better performance at the beginning and end of every epoch.

	
	\vspace{-3mm}
\begin{figure}[htb]
	\centering
	\includegraphics[scale=0.43]{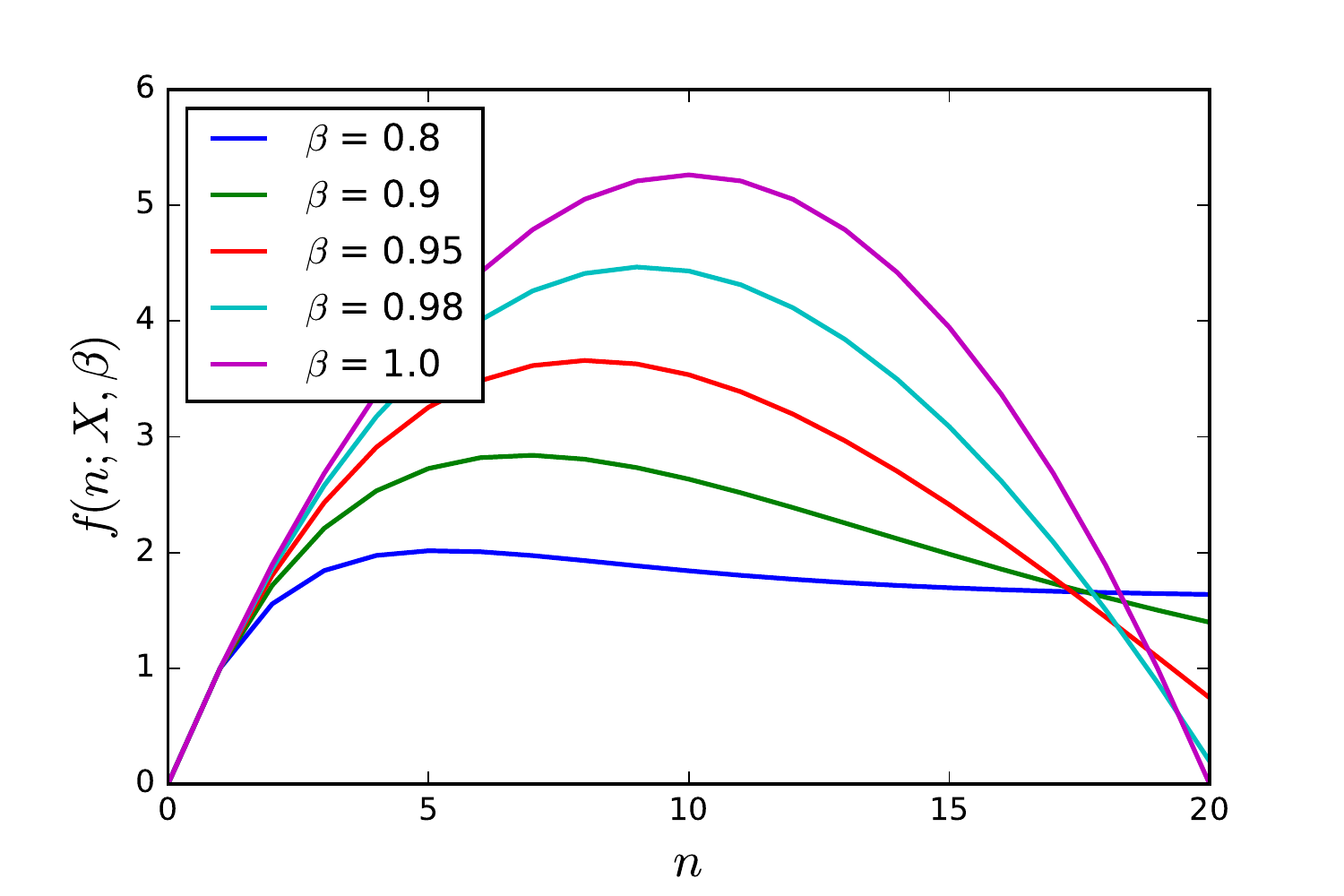}\vspace{-2mm}
	\caption{\small{The variance function $f(n;X,\beta)$ at \eqref{eq.lemma2} versus $n$ with different $\beta$ value.}}\vspace{-3mm}
	\label{fig:var_func}\vspace{-3mm}
\end{figure}
} 

{\color{black}
\subsection{Mismatch Bound}
Now we provide an upper bound on the mismatch between the long-term model (\ref{eq.long_term.rec}) and the original algorithm (\ref{alg.rr}).
\begin{lemma}[\sc Mismatch Bound] \label{lemma.mismatch}
	After long enough iterations, i.e., $k\gg 1$, the difference between the long term model trajectory (\ref{eq.long_term.rec}) and the original trajectory (\ref{alg.rr}) is
	\begin{align}
	\limsup_{k\to\infty} &\;\Ex\|\widetilde{\w}_0^{\prime k} - \widetilde{\w}_0^{k}\|^2 \leq \frac{4\mu^2\delta^2N^2}{\nu^2(N-1)}\cK +O(\mu^3)
	\label{eq.mismatch}
	\end{align}
\end{lemma}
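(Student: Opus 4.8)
The plan is to track the epoch-to-epoch difference $\z^k \define \widetilde{\w}_0^{\prime k} - \widetilde{\w}_0^{k}$ directly. Subtracting the long-term recursion \eqref{eq.long_term.rec} from the exact recursion \eqref{eq.error_expand}, and observing that the leading stochastic term $\mu\, s'(\w_0^k)$ is \emph{identical} in both models (it is evaluated at the original iterate $\w_0^k$ in either case), one gets $\z^{k+1} = (I-\mu H)^N \z^k - \mu\, d^k$, where $d^k$ collects exactly the two terms discarded in passing to the long-term model: the ``noise mismatch'' sum $\sum_{i=1}^N (I-\mu H)^{N-i}\big(s_{\bsigma^k(i)}(\w_{i-1}^k) - s_{\bsigma^k(i)}(\w_0^k)\big)$ and the Hessian-residual sum $\sum_{i=1}^N (I-\mu H)^{N-i}\xi(\w_{i-1}^k)$. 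Since $\nu I \preceq H \preceq \delta I$ under Assumption \ref{assumption.1}, for the step sizes in force $\|(I-\mu H)^N\| \le (1-\mu\nu)^N \le 1 - \mu\nu N/2 \define \alpha$. Applying the convexity bound $\|a-b\|^2 \le \alpha^{-1}\|a\|^2 + (1-\alpha)^{-1}\|b\|^2$ with $a=(I-\mu H)^N\z^k$ and $b = \mu d^k$, and taking expectations, $\Ex\|\z^{k+1}\|^2 \le \alpha\,\Ex\|\z^k\|^2 + \tfrac{\mu^2}{1-\alpha}\,\Ex\|d^k\|^2$; passing to the limit yields $\limsup_{k}\Ex\|\z^k\|^2 \le \tfrac{\mu^2}{(1-\alpha)^2}\limsup_{k}\Ex\|d^k\|^2 = \tfrac{4}{\nu^2 N^2}\limsup_{k}\Ex\|d^k\|^2$, which already fixes the characteristic $4/\nu^2$.

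It then remains to bound $\limsup_k \Ex\|d^k\|^2$. The residual part is harmless: by \eqref{3h98u.ni} and Corollary \ref{lemma.corollary.1}, $\Ex\|\xi(\w_{i-1}^k)\|^2 = O(\mu^4)$ in steady state, so the $\xi$-sum has squared norm $O(\mu^4)$ and its cross term with the noise-mismatch sum is $O(\mu^3)$; both disappear into the final $O(\mu^3)$. For the noise-mismatch sum, $\|(I-\mu H)^{N-i}\| \le 1$ and, because both $\grad_w Q(\cdot\,; x_{\bsigma^k(i)})$ and $\grad_w J(\cdot)$ are $\delta$-Lipschitz, $\|s_{\bsigma^k(i)}(\w_{i-1}^k) - s_{\bsigma^k(i)}(\w_0^k)\| \le 2\delta\,\|\w_{i-1}^k - \w_0^k\|$; combined with $(\sum_{i=1}^N a_i)^2 \le N^2\max_i a_i^2$ this gives $\Ex\|d^k\|^2 \le 4\delta^2 N^2\,\max_{1\le i\le N}\Ex\|\w_{i-1}^k - \w_0^k\|^2 + O(\mu^3)$. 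Everything thus reduces to estimating the within-epoch drift $\Ex\|\w_{i-1}^k - \w_0^k\|^2$.

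This is where Lemma \ref{lemma.2} enters and produces the $1/(N-1)$ factor. From \eqref{alg.rr}, $\w_{i-1}^k - \w_0^k = -\mu\sum_{j=1}^{i-1}\grad_w Q(\w_{j-1}^k; x_{\bsigma^k(j)})$. Writing $\grad_w Q(\w_{j-1}^k; x_n) = \grad_w Q(w^\star; x_n) + r_j$ with $\|r_j\| \le \delta\|\w_{j-1}^k - w^\star\|$, Corollary \ref{lemma.corollary.1} gives $\Ex\|\sum_{j=1}^{i-1} r_j\|^2 = O(\mu^2)$, hence $\Ex\|\w_{i-1}^k - \w_0^k\|^2 = \mu^2\,\Ex\big\|\sum_{j=1}^{i-1}\grad_w Q(w^\star; x_{\bsigma^k(j)})\big\|^2 + O(\mu^3)$ (the cross term, bounded by Cauchy--Schwarz, contributes $O(\mu^3)$ since the first factor is $O(1)$, not $O(\mu)$). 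The key observation is that optimality of $w^\star$ forces $\sum_{n=1}^N \grad_w Q(w^\star; x_n) = N\grad_w J(w^\star) = 0$, so the set $X = \{\grad_w Q(w^\star; x_n)\}_{n=1}^N$ satisfies the zero-sum hypothesis of Lemma \ref{lemma.2} with ${\rm Var}(X) = \cK$. Invoking \eqref{eq.lemma2} with $\beta = 1$ gives $\Ex\big\|\sum_{j=1}^{i-1}\grad_w Q(w^\star; x_{\bsigma^k(j)})\big\|^2 = \tfrac{(i-1)(N-i+1)}{N-1}\cK$, so $\Ex\|\w_{i-1}^k - \w_0^k\|^2 = \tfrac{\mu^2(i-1)(N-i+1)}{N-1}\cK + O(\mu^3)$. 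Using $\max_i (i-1)(N-i+1)\le N^2/4$ and substituting back through the two preceding displays collapses all constants to exactly $\tfrac{4\mu^2\delta^2 N^2}{\nu^2(N-1)}\cK + O(\mu^3)$.

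The main difficulty is bookkeeping rather than conceptual. One must verify that every discarded remainder is genuinely $O(\mu^3)$: in particular that the steady-state bound $\sqrt{\Ex\|\w_{j-1}^k - w^\star\|^2} = O(\mu)$ supplied by Corollary \ref{lemma.corollary.1} enters the drift expansion cleanly — so that the cross term between $\sum_j \grad_w Q(w^\star; x_{\bsigma^k(j)})$ and $\sum_j r_j$ is $O(\mu^3)$ — and that the $\xi$-contributions, which hinge on Assumption \ref{assumption.2}, never resurface at an intermediate half-integer order. A secondary point is to confirm that the step-size restrictions inherited from Theorems \ref{lemma.start.pont} and \ref{main.theorem} (so that $\mu\nu N \le 1$ and $I-\mu H$ is a contraction) suffice for the inequality $(1-\mu\nu)^N \le 1-\mu\nu N/2$ used above.
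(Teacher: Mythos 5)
Your proposal follows essentially the same route as the paper's proof in Appendix~H: subtract the long-term recursion from the exact one, contract the homogeneous part via Jensen's inequality with a parameter tied to $(1-\mu\nu)^N$, reduce the noise-mismatch term to the within-epoch drift $\Ex\|\w_{i-1}^k-\w_0^k\|^2$ through the $2\delta$-Lipschitz property of $s_n(\cdot)$, and evaluate that drift with Lemma~\ref{lemma.2} applied to the zero-sum set $\{\grad_w Q(w^\star;x_n)\}$, capping $(i-1)(N-i+1)$ by $N^2/4$. The bookkeeping differences --- you use $\alpha=1-\mu\nu N/2$ and $\|(I-\mu H)^{N-i}\|\le 1$ where the paper keeps $(1-\mu\nu)^N$ and the exact geometric sum $(1-(1-\mu\nu)^N)/\mu\nu$ --- cancel out, and both routes land on the same constant $\tfrac{4\mu^2\delta^2N^2}{\nu^2(N-1)}$.

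The one step you cannot get for free is the claim that $\Ex\|\xi(\w_{i-1}^k)\|^2=O(\mu^4)$ follows from \eqref{3h98u.ni} and Corollary~\ref{lemma.corollary.1}. Inequality \eqref{3h98u.ni} gives $\|\xi(\w)\|^2\le\tfrac{\kappa^2}{4}\|\tw\|^4$, so what is actually needed is the \emph{fourth}-moment bound $\limsup_k\Ex\|\tw_{i-1}^k\|^4=O(\mu^4)$, and this does not follow from the second-moment bound $\Ex\|\tw_{i-1}^k\|^2=O(\mu^2)$ of Corollary~\ref{lemma.corollary.1} alone. The same fourth moment is implicitly required by your Cauchy--Schwarz estimate of the cross term between the noise-mismatch sum and the $\xi$-sum. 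The paper closes this by running a separate recursion for $\|\tw_0^{k+1}\|^4$ built on \eqref{xcnweu-2} to establish $\Ex\|\tw_0^k\|^4=O(\mu^4)$; you would need to add that argument (and its extension to the intermediate iterates $\w_{i-1}^k$) before the $\xi$-contribution can legitimately be absorbed into the $O(\mu^3)$ remainder. With that supplement, your derivation is complete and reproduces the paper's bound exactly.
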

{\bf Proof:}
See Appendix  \ref{mismatch.proof}.\hfill \qd
}

\smallskip
\section{QUADRATIC RISKS AND HYPERBOLIC REPRESENTATION}
{\color{black} Lastly,} we consider an example involving a quadratic (least-squares) risk  to show  that, in this case,  the long-term model provides the exact MSD for the original algorithm. The analysis will also provide some insights into expression (\ref{eq.msd.reshuffle}).  {\color{black} It also motivates a hyperbolic representation for the MSD, which helps provides some more insights into the MSD behavior.}

\subsection{Quadratic Risks}
Thus, consider the following quadratic risk function:\vspace{-1mm}
\be
	\min_{w}  J(w) = \frac{1}{2N} \sum_{n=1}^N\|Aw-x_n\|^2
\ee
where $A$ has full column rank. We have:
\eq{
	\nabla_w J(w) = &\, A\tran A w - A\tran\underbrace{\left(\frac{1}{N}\sum_{n=1}^N  x_n\right)}_{\define \bar{x}}\\[-0.5mm]
	\nabla_w Q(w; x_n) =&\,A\tran A w -A\tran x_n\\
	\nabla^2 J(w_{i}^k) =&\, A\tran A\\
	s_n(w) 
	=&\, A\tran (x_n - \bar{x})
}
Since the gradient noise $s_n(\w)$ is independent of $\w$, we have
\eq{
	s_n(\w_i^k)-s_n(\w_0^k) \equiv  0
}
Moreover, since the risk is quadratic, it also holds that  \be\xi(w)\equiv 0\ee
Therefore, the long-term model is exactly the same as the original algorithm. For this example, we can calculate the following quantities:
\eq{
	w^\star =&\, (A\tran A)^{-1} A\tran \bar{x}\\
	R_s^\star=&\,A\tran \frac{1}{N}\sum_{n=1}^N(x_n - \bar{x})(x_n - \bar{x})\tran A=A\tran R_{xx} A\\
	{\rm Var}(x) = & \frac{1}{N}\sum_{n=1}^N \|x_n -\bar{x}\|^2\,\\
	I-\mu H =&\, I - \mu A\tran A
}
In special case when the columns of $A$ are orthogonal and normalized, i.e., $A\tran A = I$, we can  simplify the MSD expression \eqref{eq.msd.reshuffle} by noting that 
\eq{
	&\hspace{-4mm}R_s^{\prime \star}\nn\\[-2mm]
	=&\, \frac{1}{N-1} \!\left(\!N\sum_{i=0}^{N-1} (1-\mu)^{2i} - \Big(\sum_{i=0}^{N-1} (1-\mu)^{i}\Big)^2\right)\!A\tran R_{xx}A
	\nn\\
	=&\,\frac{1}{N-1}\!\!\left(\!\frac{N(1-(1\!-\!\mu)^{2N})}{2\mu-\mu^2} - \frac{\left(1 -(1\!-\!\mu)^{N}\right)^2}{\mu^2}\right)\!A\tran R_{xx}A
}
and, hence,
\eq{
{\rm MSD}_{\rm RR} =&\,\mu^2\Tr\left((1 - (1-\mu )^{2N})^{-1} R^{\prime \star}_s\right)\nn\\
=&\, \frac{\mu^2}{N-1}\left(\frac{N}{2\mu-\mu^2}- \frac{(1 -(1-\mu)^{N})^2}{\mu^2(1 -(1-\mu)^{2N})}\right){\rm Var}(x)\nn\\
=&\,\frac{\mu^2}{N-1}\left(\frac{N}{2\mu-\mu^2}- \frac{1 -(1-\mu)^{N}}{\mu^2(1 +(1-\mu)^{N})}\right){\rm Var}(x)}
In order to provide further insights on this MSD expression, 
we simplify it under a  small $\mu$ assumption. We could introduce the Taylor series:  
\eq{
	(1-\mu)^N =&\, 1  - N\mu  + O(N^2\mu^2)\label{gh23} 
}
However, this approximation can be bad if $N$ is large, which is not uncommon in big data. Instead, we appeal to:
\eq{
	(1-\mu)^{N} = e^{N\ln(1-\mu)} = e^{-\mu N + O(\mu^2 N)} \approx e^{-\mu N} \label{exp.approx}
}
Notice it is $O(\mu^2 N)$ instead of $O(\mu^2 N^2)$, and therefore  \eqref{exp.approx} is a tighter approximation than \eqref{gh23} when $N$ is large.
Based on this, we further approximate:
\eq{
	\frac{1 -(1-\mu)^{N}}{1 +(1-\mu)^{N}} \approx {\tanh}(\mu N /2)
}
and arrive at the simplified expression:
\eq{
	{\rm MSD}_{\rm RR} \approx& \frac{\mu}{N-1}\left(\frac{N}{2} - \frac{\tanh(\frac{\mu N}{2})}{\mu}\right) {\rm Var}(x)\nn\\
	=&\frac{\mu}{2}\frac{N}{N-1} \left(1- \frac{2}{\mu N}\tanh\Big(\frac{\mu N}{2}\Big)\right){\rm Var}(x) \label{simply.msd}
}
For comparison purposes, we know that a simplified expression for MSD under uniform sampling has the following expression\cite{yuan2016stochastic}:
\eq{
	{\rm MSD}_{\rm us} =&\frac{\mu}{2}{\rm Var}(x)
}
Hence, the random reshuffling case has an extra multiplicative factor:
\eq{
	m_{\rm RR} \define\frac{N}{N-1} \left(1- \frac{2}{\mu N}\tanh\Big(\frac{\mu N}{2}\Big)\right)	
}
We plot $m_{\rm RR}$ versus $\mu N$ in the left plot of Fig.~\ref{fig:rr_factor} where we ignore $\frac{N}{N-1}$.
Now it is clear from the figure that the smaller the step size $\mu$ or the smaller sample size $N$ are, the larger the improvement in performance is. In contrast,  when $\mu N$ goes to infinity, the term $m_{\rm RR}$ will converge to $1$, i.e., the same performance as uniform sampling situation, which is consistent with the infinite-horizon case.
Lastly, noting that 
\eq{
	&\hspace{-4mm}R_{s, i}^{\prime \star}\nn\\
	=&\, \frac{1}{N-1}\!\! \left(\!N\sum_{j=0}^{i-1} (1-\mu)^{2j} - \Big(\sum_{j=0}^{i-1} (1-\mu)^{j}\Big)^2\right)\!A\tran R_{xx}A
	\nn\\
	=&\,\frac{1}{N-1}\!\!\left(\!\frac{N(1-(1-\mu)^{2i})}{2\mu-\mu^2} - \frac{\left(1 -(1-\mu)^{i}\right)^2}{\mu^2}\right)\!A\tran R_{xx}A\label{g23g.xc}
}
and using the approximation \eqref{exp.approx}:
\eq{
	\hspace{-3mm}R_{s, i}^{\prime \star}\approx\frac{N}{N-1} \left(\frac{1- e^{-2\mu i}}{2\mu} - \frac{(1- e^{-\mu i})^2}{\mu^2 N}\right)A\tran R_{xx}A
}
Substituting in \eqref{2389g.cs}, we get for $i\in [1,N]$:
\eq{
&\hspace{-5mm}\lim_{k\to\infty}\Ex\|\widetilde{\w}_{i}^{\prime k}\|^2\nn\\
\approx&
e^{-2\mu i}\frac{\mu}{2}\frac{N}{N-1} \left(1- \frac{2}{\mu N}\tanh\left(\frac{\mu N}{2}\right)\right){\rm Var}(x)\nn\\
&\; + (1-e^{-2\mu i})\frac{\mu}{2}\underbrace{\frac{N}{N-1} \left(1- \frac{2}{\mu N}\tanh\left(\frac{\mu i}{2}\right)\right)}_{\define m_{\rm RR}(i)}{\rm Var}(x) \raisetag{4mm}\label{x2.dsy3g}
}
Since $\tanh(\cdot)$ is monotonically increasing, $m_{\rm RR}(i) \geq m_{\rm RR}$. With $i$ increasing, the convex combination gives more weight to the second term, which is larger than the first term. This explains the increasing of MSD at the first half of the cycle. With $i$ increasing further, $m_{\rm RR}(i)$ will decrease to the same level as  $m_{\rm RR}$. Hence, MSD at the second half of the cycle will decrease again.
The simulation result shows in the right plot of Fig.~\ref{fig:rr_factor} fits with the theoretical analysis for quadratic risks rather well. \vspace{-2mm}
\begin{figure}[htp]
	\centering
	\hspace{-2mm}\includegraphics[scale=0.265]{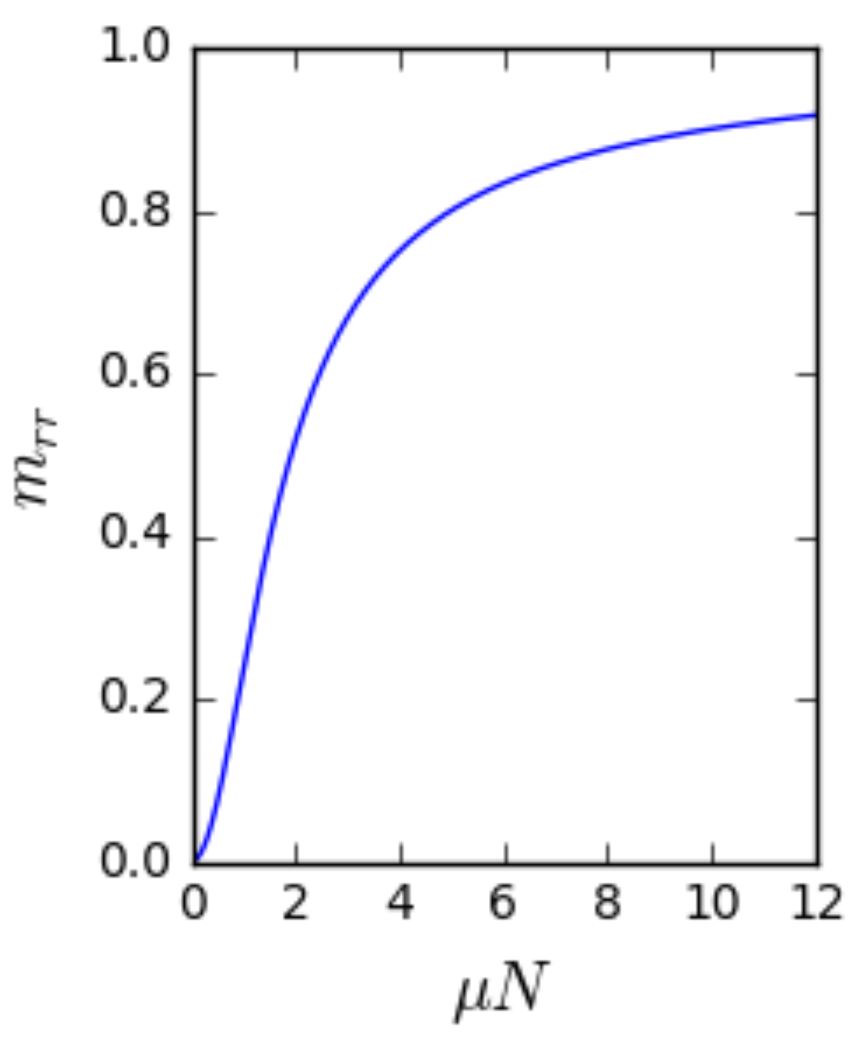}\vspace{-0mm}\hspace{-1.5mm}
	\includegraphics[scale=0.44]{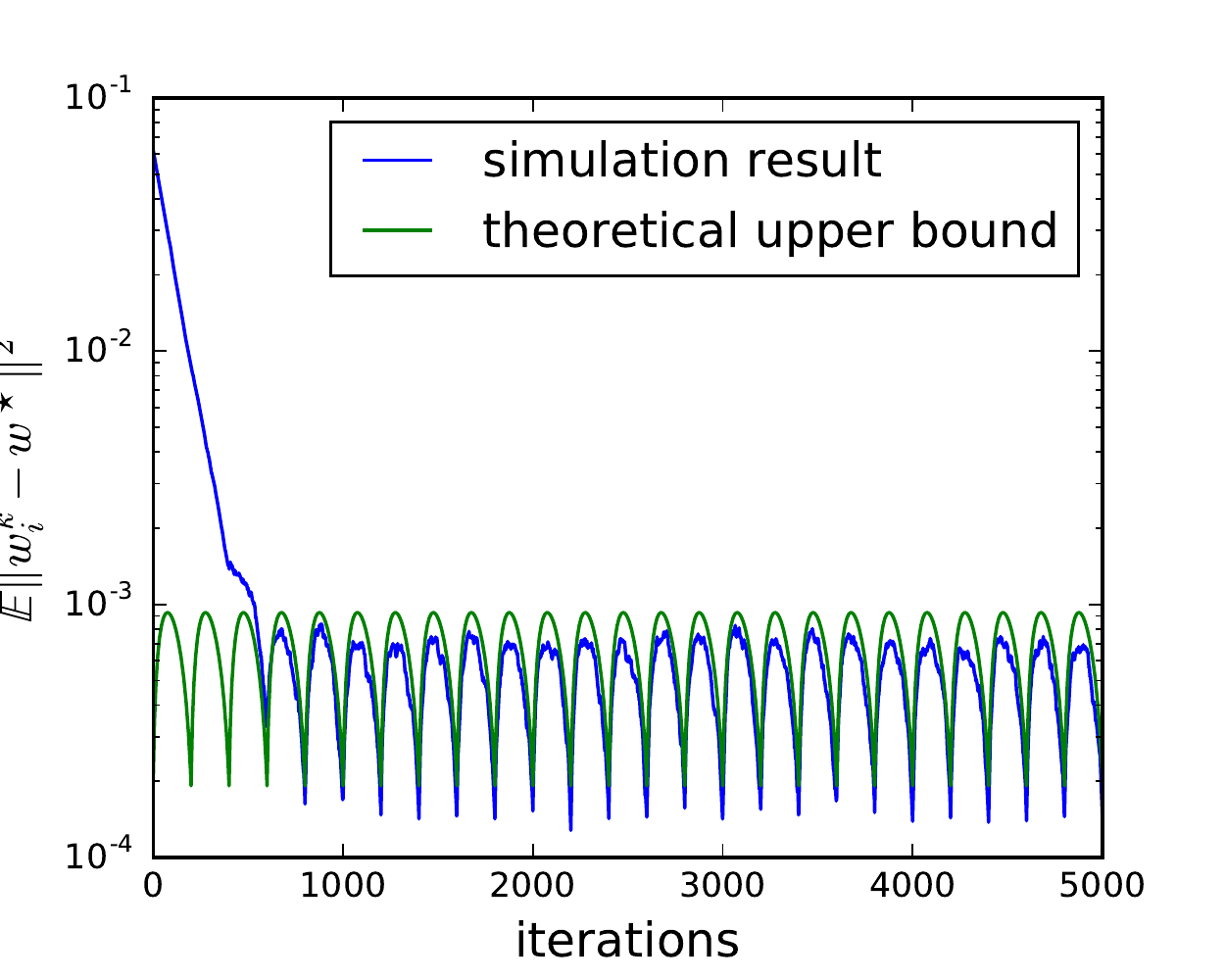}
	\caption{\small \color{black}Left: The curve of $m_{\rm RR}$ versus $\mu N$. Right: Mean-square-deviation performance of random reshuffling for a quadratic risk.}\vspace{-5mm}
	\label{fig:rr_factor}
\end{figure}

{
\begin{table*}[!htp]
	\caption{\color{black}\small Summary of the Results with Random Reshuffling versus Uniform Sampling with Replacement.\vspace*{-3mm}}\label{table.conclusion}
	\centering
	\footnotesize{\color{black}
		\begin{tabular}{|c||c|c|c|}
			\cline{1-4}
			{\color{white}$\Big($}&Uniform sampling\cite{yuan2016stochastic} \cellcolor[gray]{0.8}&	Random Reshuffling (Long-term or Quadratic) \cellcolor[gray]{0.8}&Random Reshuffling \cellcolor[gray]{0.8}\nn\\\hhline{====}
			{\color{white}$\Big($}Steady-state &$O(\mu)$&$O(\mu^3)$ --- Eq. \eqref{23gniox}&$O(\mu^2)$ --- Eq. \eqref{lemma.stability2}\\\cline{1-4}
			MSD$^{\rm epoch}$&$\frac{\mu}{2}\Tr(H^{-1}R_s)$& $\mu^2 \Tr\Big((I- (I-\mu H)^{2N})^{-1}R^{\prime \star}_s\Big)$ --- Eq. \eqref{eq.msd.reshuffle}
			&\eqref{eq.msd.reshuffle} + $O(\mu^2)$ --- Eq.\eqref{eq.mismatch} \\\cline{1-4}
			MSD$^{\rm epoch}$ (Hyperbolic)&$\displaystyle\frac{\mu}{2}\Tr\big( \Lambda^{-1} U\tran R_s^\star U \big)$&$\displaystyle\frac{\mu}{2}\Tr\Bigg(M_{\rm RR}\Lambda^{-1} U\tran R_s^\star U \Bigg)$ --- Eq. \eqref{msd.hyperbolic}&\eqref{msd.hyperbolic}+ $O(\mu^2)$ --- Eq.\eqref{eq.mismatch}\\\cline{1-4}
			MSD$^{\rm iteraion}$& $\frac{\mu}{2}\Tr(H^{-1}R_s)$& 
			\hspace{-4mm}$
			\begin{array}{ll}
				&(1-\mu\nu)^{2i}\mu^2\Tr\left(\big(I-(I-\mu H)^{2N}\big)^{-1}  R_{s}^{\prime \star}\right)\nn\\
				&\;\;	 +\Big(1\!-\!(1\!-\!\mu\nu)^{2i}\Big)\mu^2\Tr\left(\big(I\!-\!(I\!-\!\mu\nu)^{2i}\big)^{-1}  R_{s,i}^{\prime \star}\right)\nn
			\end{array}
			$--- Eq. \eqref{2389g.cs}
			&\eqref{2389g.cs} + $O(\mu^2)$ --- Eq.\eqref{eq.mismatch} \\\cline{1-4}
			MSD$^{\rm iteraion}$(Hyperbolic)& $\displaystyle\frac{\mu}{2}\Tr\big( \Lambda^{-1} U\tran R_s^\star U \big)$
			& \hspace{-8mm}$\begin{array}{ll}
			&e^{-2\mu i}\frac{\mu}{2}\Tr\Big(M_{RR}\Lambda^{-1} U\tran R_s^\star U \Big)\\
			&\;\;\;+ (1-e^{-2\mu i})\frac{\mu}{2}\Tr\Big(  M_{RR}(i)\Lambda^{-1} U\tran R_s^\star U \Big)
			\end{array}$
			 --- Eq. \eqref{23gdsdse}\hspace{-2mm}
			&\eqref{23gdsdse} + $O(\mu^2)$ --- Eq.\eqref{eq.mismatch} \\\cline{1-4}
			{\color{white}$\Big($}Infinite data&$\frac{\mu}{2}\Tr\big(H^{-1}R_s\big)$&$\frac{\mu}{2}\Tr(H^{-1}R_s)$--- Eq. \eqref{msd.infty}&$\frac{\mu}{2}\Tr(H^{-1}R_s)$ \\\cline{1-4}
			{\color{white}$\Big($}Periodic &No& Yes --- Eq. \eqref{2389g.cs} & Yes\\
			\cline{1-4}
		\end{tabular}\vspace{-3mm}
	}
\end{table*}
}
\subsection{Hyperbolic Representation for MSD in Long-term Model}\vspace{-1mm} \label{subsec.hyper}
Motivated by the result for the quadratic risk case, we now derive a similar expression for the MSD more generally also in terms of a tanh function. First, we extend result \eqref{exp.approx} into a matrix version. Supposing $\Lambda$ is a positive diagonal matrix and $\mu$ is sufficiently small such that $I-\mu\Lambda$ is a stable matrix, we have\vspace{-1mm}
\eq{
	(I-\mu \Lambda)^N \approx e^{-\mu N \Lambda}
}
and \vspace{-0.3mm}
\eq{
	\sum_{i=0}^{N-1}(I-\mu \Lambda)^i =& \frac{1}{\mu} (I-\mu \Lambda)^ N\Lambda^{-1}
	\approx \frac{1}{\mu}e^{-\mu N \Lambda}\Lambda^{-1} \label{g892.g3}
}
It follows that
\eq{
	&\Tr\left(\!(I- (I\!-\!\mu H)^{2N})^{-1}\left(\sum_{i=0}^{N-1} (I-\mu H)^iR^\star_s(I-\mu H)^i\right)\right)\nn\\
	&\stackrel{\eqref{eq.weight.solution}}{=}\Tr\left(\sum_{i=0}^{N-1} \sum_{k=0}^\infty(I-\mu H)^i(I-\mu H)^{2kN} (I-\mu H)^i R_s^\star\right)\nn\\
	&= \Tr\left( \sum_{k=0}^\infty\sum_{i=0}^{N-1}(I-\mu H)^{2(kN+i)}  R_s^\star\right)\nn\\
	&\stackrel{(a)}{=} \Tr\left(\sum_{j=0}^{\infty} (I-\mu H)^{2j}  R_s^\star\right)\nn\\
	&= \Tr\left((I-(I-\mu H)^2)^{-1} R_s^\star \right)\nn\\
	&\approx \frac{1}{2\mu}\Tr (H^{-1}R_s^\star)=\frac{1}{2\mu} \Tr(\Lambda^{-1} U\tran R_s^\star U )
}
where in step (a) we used the fact that  $k N + i$ is the $N$-modular representation of all integer numbers. To shorten the notation, we let:\vspace{-1mm}
\eq{
	\tau \define \mu N
}
Next, for the second part of \eqref{eq.msd.reshuffle}:
\eq{
	&\Tr\left(\!\!(I- (I\!-\!\mu H)^{2N})^{-1}\Big[\sum_{i=0}^{N-1}(I\!-\!\mu H)^i\Big] R^{ \star}_s \Big[\sum_{i=0}^{N-1}(I\!-\!\mu H)^i\Big]\!\right)\nn\\
	&\stackrel{(a)}{=}\!\frac{1}{\mu^2}\Tr\Big(\!(I\!-\! e^{\!-2\tau\Lambda})^{-1}(I\!-\!e^{\!-\tau \Lambda}) \Lambda^{-1} U\tran R_s^\star U \Lambda^{-1} (I\!-\!e^{\!-\tau \Lambda})\!\Big)\nn\\
	&= \frac{1}{\mu^2}\Tr\Big(\!\Lambda^{-1}(I\!-\!e^{\!-\tau \Lambda})(I\!-\!\mu e^{\!-2\tau\Lambda})^{-1}(I\!-\!e^{\!-\tau \Lambda}) \Lambda^{-1} U\tran R_s^\star U  \!\Big)\nn\\
	&\stackrel{(b)}{=}\frac{1}{\mu^2}\Tr\Big(\Lambda^{-1}(I+ e^{-\tau\Lambda})^{-1}(I-e^{-\tau \Lambda}) \Lambda^{-1} U\tran R_s^\star U \Big)
	\nn\\
	&= \frac{1}{\mu^2}\Tr\Big(\Lambda^{-1}\tanh(\tau\Lambda/2) \Lambda^{-1} UR_s^\star U\tran  \Big)\nn\\
	&\stackrel{}{=} \frac{N}{2\mu}\Tr\Big(2\tau^{-1}\Lambda^{-1}\tanh(\tau\Lambda/2) \Lambda^{-1} UR_s^\star U\tran  \Big)
}
where step (a) replaces $H$ by its eigendecomposition and uses \eqref{g892.g3}, while  step (b) exploits the fact that
\eq{
	I-e^{-2\tau \Lambda}  = (I+e^{-\tau \Lambda})(I-e^{-\tau \Lambda})
}
Moreover, the tanh notation refers to
\eq{
\tanh{\Lambda} = {\rm diag}\{\tanh(\Lambda_{1,1}), \cdots, \tanh(\Lambda_{M,M})\}
}
Combining the above two results gives
\eq{
&\hspace{-2mm}{\rm MSD}_{\rm RR}^{\rm lt} \nn\\
&= \frac{\mu}{2}\Tr\Bigg( \underbrace{ \frac{N}{N-1}\Big[I - \frac{2}{\mu N}\Lambda^{-1}\tanh\Big(\frac{\mu N}{2}\Lambda\Big)\Big]}_{\define M_{RR}}\Lambda^{-1} U\tran R_s^\star U \Bigg)\raisetag{4mm}\label{msd.hyperbolic}
}
Compared with the  uniform sampling case:
\eq{
	{\rm MSD}_{\rm US} =& \frac{\mu}{2}\Tr\big( H^{-1}R_s^\star \big)
	=\frac{\mu}{2}\Tr\big( \Lambda^{-1} U\tran R_s^\star U \big)\vspace{-2mm} \label{eq.msd.hyperbolic}
}
Now, it is clear that the diagonal matrix factor $M_{\rm RR}$ serves the same purpose  as $m_{\rm RR}$. Each entry of this factor matrix captures the improvement of random reshuffling over uniform sampling. Lastly, we focus on the order of expression \eqref{eq.msd.hyperbolic}. We know from the Taylor's expansion that
\eq{
	1 - \frac{1}{x}\tanh(x) = O(x^2)	
}
We conclude that 
\eq{
	M_{\rm RR} = O(\mu^2 N^2)\,\Longrightarrow\, {\rm MSD}_{\rm RR}^{\rm lt} = O(\mu^3)
	\label{23gniox}
}
that confirms the observation of $O(\mu^3)$ in the Fig.~\ref{fig:sgd_vs_rr_1en3}.
\vspace{-0mm}

{\color{black} Lastly, similar to the derivation for the quadratic case \eqref{g23g.xc}--\eqref{x2.dsy3g}, we can establish the hyperbolic representation of MSD for general case at all iterations:\small
	\eq{
		&\hspace{-5mm}\lim_{k\to\infty}\Ex\|\widetilde{\w}_{i}^{\prime k}\|^2\nn\\[-2mm]
		\approx&
		e^{-2\mu i}\frac{\mu}{2}\Tr\Bigg(\frac{N}{N-1}\Big[I - \frac{2}{\mu N}\Lambda^{-1}\tanh\Big(\frac{\mu N}{2}\Lambda\Big)\Big]\Lambda^{-1} U\tran R_s^\star U \Bigg)\nn\\
		&\; +\!  (1\! -\! e^{\! -2\mu i})\frac{\mu}{2}\Tr\Bigg(\!\! \underbrace{ \frac{N}{N\! -\! 1}\Big[I\!  -\!  \frac{2}{\mu N}\Lambda^{-1}\! \tanh\Big(\frac{\mu i}{2}\Lambda\Big)\Big]}_{\define M_{RR}(i)}\! \Lambda^{\! -\! 1} U\tran\!  R_s^\star U\!  \Bigg)\label{23gdsdse}\raisetag{4mm}
	}
}\normalsize
\vspace{-10mm}
\subsection{Infinite-Horizon Case}
In this work, we are mostly interested in the finite-data case, where the data size is $N$. The results so far are based on this assumption. However, it is inspiring though to see how the performance result would simplify if we allow $N$ to grow to infinity. In that case, we get
\begin{align}
\lim_{N\to\infty} {\rm MSD}^{\rm lt}_{\rm RR} =&\; \mu^2\lim_{N\to\infty}\Tr\left((I- (I-\mu H)^{2N})^{-1}R^{\prime\star}_s\right)
\nn\\
=&\;\mu^2\lim_{N\to\infty}\Tr\left(R^{\prime\star}_s\right)\label{eq.80}
\end{align}
since for sufficiently small $\mu$, the matrix $I-\mu H$ is stable. Moreover, observe further that:
\eq{\lim_{N\to\infty} \sum_{i=0}^{N-1}(I-\mu H)^{2i} =\;& \Big(I - (I-\mu H)^2\Big)^{-1}\nn\\
	=\;&\frac{1}{2\mu}H^{-1}\big(I - \mu H/2\big)^{-1}\nn\\
	=\;&\frac{1}{2\mu} H^{-1} + O(1)	
}
{\color{black} where $O(1)$ represents a matrix where all entries are $O(1)$.}
Hence, 
\begin{align}
\lim_{N\to\infty}\Tr\left(R^{\prime \star}_s\right) =&\; \Tr\left(\lim_{N\to\infty}R^{\prime \star}_s\right)\nn\\
=&\;\Tr\left(\lim_{N\to\infty} \sum_{i=0}^N(I-\mu H)^iR^\star_s(I-\mu H)^i \right)\nn\\
=&\;\Tr\left( \lim_{N\to\infty}\sum_{i=0}^N(I-\mu H)^{2i}R^\star_s \right)\label{98hj9gh23}\\
=&\; \frac{1}{2\mu} \Tr (H^{-1}R^\star_s) +O(1) \label{89g32.}
\end{align}
Substituting this result back into (\ref{eq.80}), we establish:
\be
\lim_{N\to\infty} {\rm MSD}_{\rm RR}^{\rm lt} = \frac{\mu}{2}\Tr (H^{-1}R_s) + O(\mu^2)
\label{msd.infty}\textsl{}\ee
which is exactly the same expression we have in the streaming data case\cite{sayed2014adaptation}. 
{\color{black}
	If we examine the hyperbolic approximation of MSD, performance is proportional to ${\rm tanh}(\mu N)$, which implies the performance will degrade with $\mu N$ but it will saturate if $\mu N$ keeps increasing. Equation \eqref{msd.infty} shows that the limit value is the same as the uniform sampling case. 
}


%

\section{Concluding Remarks}
In conclusion, this work studies the performance of stochastic gradient implementations under random reshuffling and provides a detailed analytical justification  for the improved performance of these implementations over uniform sampling. The work focuses on constant step-size adaptation, where the
agent is continuously learning.  The analysis establishes analytically that random
reshuffling outperforms uniform sampling by showing 
that iterates approach a smaller neighborhood of size $O(\mu^2)$
around the minimizer rather than $O(\mu)$. Simulation results
illustrate the theoretical findings. 
We also summarize the conclusions in Table \ref{table.conclusion}.

\appendices
\section{Proof of Theorem \ref{lemma.start.pont}}\label{proof.theorem.1}
{\color{black} Note first that
	\eq{
		\w_0^{k+1} \!\define& \w_{N}^k\nn\\
		\stackrel{\eqref{alg.rr}}{=}\;&\w_{N-1}^k - \mu\grad_w Q(\w_{N-1}^k; x_{\bsigma^k(N)}) \nn\\[-1mm]
		& \vdots\nn\\[-2mm]
		=\;& \w_{0}^k -\mu \sum_{i=1}^N\grad_w Q(\w_{i-1}^k; x_{\bsigma^k(i)})\nn\\
		\stackrel{(\ref{grad.define})}{=}\;& \w_{0}^k -\mu N \nabla_w J(\w^k_0) \label{gewg.gwe}\\
		&\;- \mu \sum_{i=1}^N\underbrace{\big( \grad_w Q(\w_{i-1}^k; x_{\bsigma^k(i)}) -
			\grad_w Q(\w_{0}^k; x_{\bsigma^k(i)})\big)}_{\define g_{\bsigma^k(i)}(\w^k_{i-1})} \nn
	}
	where we denote by $g_{\bsigma^k(i)}(\w^k_{i-1})$ the incremental gradient noise which is the mismatch between the gradient approximations evaluated at  $\w^k_0$ and $\w^k_{i-1}$.
	Next, we introduce the error vector: \eq{\widetilde{\w}_0^k \define  w^\star - \w_0^k}
	and let  $0~<~t~<~1$ be any scalar that we will specify further below. Subtracting $w^\star$ from both sides of (\ref{gewg.gwe}) and squaring, we get:
	{\color{black}
		\eq{
			&\hspace{-5mm}\|\tw_0^{k+1}\|^2 \nn\\
			=& \left\|\tw_{0}^k + \mu N \nabla_w J(\w^k_0)+\mu\sum_{i=1}^Ng_{\bsigma^k(i)}(\w^k_{i-1})\right\|^2\nn\\
			\stackrel{(a)}{\leq}& \frac{1}{t}\|\tw_{0}^k + \mu N \nabla_w J(\w^k_0) \|^2 +\frac{\mu^2}{1-t}\left\|\sum_{i=1}^Ng_{\bsigma^k(i)}(\w^k_{i-1})\right\|^2\nn\\
			\stackrel{(b)}{\leq}& \frac{1}{t}\left\|\tw_{0}^k \!+\! \mu N \nabla_w J(\w^k_0) \right\|^2  \!\!+\!\frac{\mu^2N}{1-t}\left(\sum_{i=1}^N\left\|g_{\bsigma^k(i)}(\w^k_{i-1})\right\|^2\right) \label{2h89jbg.sbd}
		}
		where step (a) exploits Jensen's inequality:
		\eq{
			\|a+b\|^2 = \left\|\frac{t}{t} a+ \frac{1-t}{1-t}b\right\|^2 \leq \frac{1}{t}\| a \| + \frac{1}{1-t}\|b\|^2  
		}
	}
	and step (b) uses the fact that:
	\eq{
		\left\|\sum_{i=1}^N x_i\right\|^2=N^2\left\|\sum_{i=1}^N\frac{1}{N} x_i\right\|^2\leq N\sum_{i=1}^N\left\|x_i\right\|^2 \label{eq.jensen}
	}
	We show in Appendix \ref{app.incr.noise} that the rightmost term in (\ref{2h89jbg.sbd}) can be bounded by:
	\eq{
		\sum_{i=1}^N\left\|g_{\bsigma^k(i)}(\w^k_{i-1})\right\|^2 
		&\leq \frac{\mu^2 \delta^2N^3}{1-2\mu^2\delta^2N^2}\left(2\delta^2\|\tw_0^k\|^2  + \cK \right) \label{fwehig.we}
	}
	while for the first term in \eqref{2h89jbg.sbd} we have
	{\color{black}
		\eq{
			&\hspace{-4mm}\left\|\tw^k_0 + \mu N \nabla J(\w_0^k)\right\|^2\nn\\[1mm]
			=& \|\tw^k_0\|^2 + \mu^2 N^2\| \nabla J(\w_0^k)\|^2+ 2 \mu N(\tw^k_0)\tran \nabla J(\w_0^k)\nn\\[1mm]
			\leq& \Big(1-2\mu N\frac{\nu\delta}{\delta+\nu}\Big)\|\tw^k_0\|^2 + \mu N(\mu N-\frac{2}{\delta+\nu}) \| \nabla J(\w_0^k)\|^2 \nn\\[-1mm]
			\label{f9j30.3x}
		}
		where in the first inequality we exploit the co-coercivity inequality\cite{nesterov2013introductory} that 
		\eq{
			&(\nabla J(x) - \nabla J(y))\tran (x-y)\nn\\
			&\;\;\;\;\;\geq \frac{\nu\delta}{  \delta+\nu}\|x-y\|^2+\frac{1}{\delta+\nu}\|\nabla J(x)-\nabla J(y)\|^2
		}
		Next we require the step size to satisfy
		\eq{
			\mu \leq \frac{2}{(\delta+\nu)N} \label{stepsize.condition1}
		}
		Then,  the coefficient of the last term in \eqref{f9j30.3x} is negative. Combining with the strongly convexity property $\| \nabla J(\w_0^k) - \nabla J(w^\star)\| \geq \nu\|\tw_{0}^k\|$, we have
		\eq{
			&\hspace{-4mm}\left\|\tw^k_0 + \mu N \nabla J(\w_0^k)\right\|^2\nn\\[1mm]
			\leq&\Big(1-2\mu N\frac{\nu\delta}{\delta+\nu}\Big)\|\tw^k_0\|^2 + \mu N\nu^2(\mu N-\frac{2}{\delta+\nu}) \|\tw^k_0\|^2\nn\\
			=&\Big(1-\mu \nu N)^2\|\tw^k_0\|^2	\label{f9j30.3}
		}
		Combining \eqref{fwehig.we} and \eqref{f9j30.3}, we establish:
		\eq{
			\|\tw^{k+1}_0\|^2\leq& \frac{1}{t} \left(1- \mu N \nu \right)^2 \|\tw_{0}^k\|^2\nn\\
			&\;\;{} + \frac{\mu^2N}{1-t} \frac{\mu^2 \delta^2 N^3}{1-2\mu^2\delta^2N^2}\left(2\delta^2\|\tw_0^k\|^2  + \mathcal{K}\right)
		}
		\noindent We are free to choose $t\in(0,1)$. Thus, let $t=1-\mu N \nu$.  Then, we conclude that
		\eq{
			\|\tw_0^{k+1}\|^2 
			\leq& \Bigg(1-\mu N \nu + \frac{2\mu^3\delta^4N^3}{\nu(1-2\mu^2\delta^2N^2)}\Bigg)\|\tw^k_0\|^2\nn\\
			&\;\;+\frac{\mu^3\delta^2N^3\mathcal{K}}{\nu(1-2\mu^2\delta^2N^2)}  \label{g32jo.sgw}
		}
		If we assume $\mu$ is sufficiently small such that
		\eq{\label{mu-2}
			1-2\mu^2\delta^2N^2 \ge \frac{1}{2},
		}
		then inequality \eqref{g32jo.sgw} becomes
		\eq{\label{xcnweu-2}
			\|\tw_0^{k+1}\|^2 \leq\;& \Big(1-\mu N \nu + \frac{4\mu^3\delta^4N^3}{\nu}\Big)\|\tw^k_0\|^2+\frac{2\mu^3\delta^2N^3}{\nu}\mathcal{K}.
		}
		If we further assume the step-size $\mu$ is sufficiently small such that
		\eq{\label{mu-3}
			1-\mu N \nu + \frac{4\mu^3\delta^4N^3}{\nu} \le 1-\frac{1}{2} \mu N \nu
		}
	}
	then inequality \eqref{xcnweu-2} becomes
	\eq{
		\|\tw_0^{k+1}\|^2 \leq\;& \left(1-\frac{1}{2}\mu N \nu \right) \|\tw^k_0\|^2+\frac{2\mu^3\delta^2N^3}{\nu}\mathcal{K}  
	}
	Iterating over $k$, we have
	\eq{
		\|\tw_0^{k+1}\|^2\leq\;& \left(1-\frac{1}{2}\mu N \nu \right)^k \|\tw^0_0\|^2\nn\\
		&\;\;{}+\left(\frac{2\mu^3\delta^2N^3}{\nu}\mathcal{K}\right)\sum_{j=1}^{k}\left(1-\frac{1}{2}\mu N \nu \right)^j  \nnb
		\le\, & \left(1-\frac{1}{2}\mu N \nu \right)^k \|\tw^0_0\|^2+\frac{4\mu^2\delta^2N^2}{\nu^2}\mathcal{K}\label{sdnsdn}
	}
	By taking expectations {\color{black} with respect to the filtration, i.e., the collection of past information,} on both sides, we have
	\eq{
		\bE\|\tw_0^{k+1}\|^2 \leq\;&  \left(1-\frac{1}{2}\mu N \nu \right)^k \bE \|\tw^0_0\|^2+\frac{4\mu^2\delta^2N^2}{\nu^2}\mathcal{K}, \label{13g,sdfgew}
	}
	which implies that
	\eq{
		\limsup_{k\to \infty} \bE\|\tw_0^{k}\|^2  = O(\mu^2)
	}
	Finally we find a sufficient range for $\mu$ for stability. To satisfy \eqref{stepsize.condition1}, \eqref{mu-2} and \eqref{mu-3}, it is enough to set $\mu$ as
	\eq{\color{black}
		\mu &\color{black} \le \min \left\{ \frac{2}{(\delta+\nu)N} , \frac{1}{2\delta N}, \frac{\nu}{\sqrt{8}\delta^2 N} \right\} < \frac{\nu}{3\delta^2 N}.
	}
	The argument in this derivation provides a self-contained proof for the convergence result (\ref{lemma.stability2}), which generalizes the approach from \cite{ying2017rr}. There, the bound (\ref{lemma.stability2}) was derived from an intermediate property (23) in \cite{ying2017rr}, which does not always hold. Here, the same result is re-derived and shown to hold irrespective of this property. Consequently, we are now able to obtain Lemma 1 from \cite{ying2017rr} as a corollary to our current result, as shown next. 
}

\section{Derivation of \eqref{fwehig.we}}\label{app.incr.noise}
Indeed, from  Lipschitz continuity of the gradients, we have
\eq{
	\sum_{i=1}^N\left\|g_{\bsigma^k(i)}(\w^k_{i-1})\right\|^2 \leq\,& \sum_{i=1}^N\delta^2\left\|\w^k_{i-1}-\w^k_0\right\|^2 \nn\\
	=\,&\delta^2\sum_{i=1}^N\left\|\sum_{j=1}^{i-1}(\w_{j}^k - \w_{j-1}^k)\right\|^2\nn\\
	\stackrel{\eqref{eq.jensen}}{\leq}&\delta^2\sum_{i=1}^N(i-1)\sum_{j=1}^{i-1}\|\w_{j}^k - \w_{j-1}^k\|^2
}
Using the equivalence relation \eq{
	\sum_{i=1}^N\sum_{j=1}^{i-1} a_{ij}\equiv \sum_{j=1}^{N-1}\sum_{i=j+1}^N  a_{ij} \label{eq.equival.sum}
}
we obtain
\eq{
	\sum_{i=1}^N\left\|g_{\bsigma^k(i)}(\w^k_{i-1})\right\|^2\leq& \delta^2\sum_{j=1}^N\sum_{i=j+1}^{N}(i-1)\|\w_{j}^k - \w_{j-1}^k\|^2\nn\\
	\leq&\frac{\delta^2N^2}{2}\sum_{j=1}^N\|\w_{j}^k - \w_{j-1}^k\|^2 \label{gwioeg}
}
\noindent where in the second inequality we used the fact that
\eq{
	\sum_{i=j+1}^N(i-1)\leq \sum_{i=1}^N(i-1)=\frac{N(N-1)}{2} \leq\frac{N^2}{2},\;\;j=1,2,\ldots,N \label{eq.summation}
}
We can recursively bound the difference  terms in \eqref{gwioeg} as follows. From \eqref{alg.rr}, we have
\eq{
	&\hspace{-5mm}\|\w_{j}^k - \w_{j-1}^k\|^2 \nn\\
	=& \mu^2 \|\nabla_w Q(w_{j-1}; x_{\bsigma^k{(j)}})\|^2\nn\\
	\leq& 2\mu^2 \|\nabla_w Q(w_{j-1}; x_{\bsigma^k{(j)}}) - \nabla_w Q(w^\star; x_{\bsigma^k(j)})\|^2\nn\\
	&\;\;{}+2\mu^2 \|\nabla_w Q(w^\star; x_{\bsigma^k(j)})\|^2\nn\\
	\leq& 2\mu^2\delta^2\|\tw_{j-1}^k\|^2 + 2\mu^2 \|\nabla_w Q(w^\star; x_{\bsigma^k(j)})\|^2\nn\\
	\leq& 4\mu^2\delta^2\|\tw_0^k\|^2 +  4\mu^2\delta^2\|\w_{j-1}^k -\w_{0}^k\|^2\nn\\
	&\;\;{}+2\mu^2 \|\nabla_w Q(w^\star; x_{\bsigma^k(j)})\|^2
}
Summing over $j$:
\eq{
	&\hspace{-5mm}\sum_{j=1}^{N}\|\w_{j}^k - \w_{j-1}^k\|^2 \nn\\[-2mm]
	\stackrel{\eqref{gradient.noise}}{\leq}\;&	4\mu^2\delta^2N\|\tw_0^k\|^2  + 2\mu^2N\mathcal{K} +  4\mu^2\delta^2\sum_{j=1}^N\|\w_{j-1}^k -\w_{0}^k\|^2\nn\\
	=\;&	4\mu^2\delta^2N\|\tw_0^k\|^2  \!+\! 2\mu^2N\mathcal{K}\!+ \! 4\mu^2\delta^2\sum_{j=1}^N\left\| \sum_{i=1}^{j-1}(\w_{i}^k -\w_{i-1}^k)\right\|^2\nn\\
	\stackrel{\eqref{eq.jensen}}{=}\;&\!	4\mu^2\delta^2N\|\tw_0^k\|^2 \! +\! 2\mu^2N\mathcal{K} \nn\\
	&\;\;+  4\mu^2\delta^2\sum_{j=1}^N\sum_{i=1}^{j-1}(j-1)\| \w_{i}^k \!-\!\w_{i-1}^k\|^2\nn\\
	\stackrel{\eqref{eq.equival.sum}}{=}\;&\!	4\mu^2\delta^2N\|\tw_0^k\|^2  + 2\mu^2N\mathcal{K} \nn\\
	&\;\;+  4\mu^2\delta^2\sum_{i=1}^{N-1}\sum_{j=i+1}^{N}(j-1)\| \w_{i}^k -\w_{i-1}^k\|^2\nn\\
	\stackrel{\eqref{eq.summation}}{\leq}\;&\!	4\mu^2\delta^2N\|\tw_0^k\|^2  + 2\mu^2N\mathcal{K} +  2\mu^2\delta^2 N^2\sum_{j=1}^{N-1}\| \w_{j}^k -\w_{j-1}^k\|^2\nn\\
	\leq\;&	\!4\mu^2\delta^2N\|\tw_0^k\|^2  + 2\mu^2N\mathcal{K} +  2\mu^2\delta^2 N^2\sum_{j=1}^{N}\| \w_{j}^k -\w_{j-1}^k\|^2
}
Rearranging the terms, we get
\eq{
	(1\!-\!2\mu^2\delta^2N^2)\sum_{j=1}^{N}\|\w_{j}^k - \w_{j-1}^k\|^2 \leq 4\mu^2\delta^2N\|\tw_0^k\|^2  \!+\! 2\mu^2N\mathcal{K}\raisetag{2mm}
}
After substituting into \eqref{gwioeg} and simplifying, we have \eqref{fwehig.we}.

%
%
%

\section{Proof of Corollary \ref{lemma.corollary.1}}\label{proof.corollary.1}
We have
\eq{
	\Ex \|\tw^k_i\|^2 \leq& 2 \Ex\|\w^k_i - \w^k_0\|^2 + 2\Ex \|\tw^k_0\|^2\nn\\
	\leq& \color{black}2 i\sum_{j=0}^{i-1}\Ex\|\w^k_{j+1} - \w^k_j\|^2 + 2\Ex \|\tw^k_0\|^2\nn\\
	\leq& 2 i\sum_{j=0}^{i-1}\Ex\|\nabla_w Q(\w^k_j; x_{\bsigma^k(j)})\|^2 + 2\Ex \|\tw^k_0\|^2\nn\\
	\leq& 2 \mu^2\delta^2i\sum_{j=0}^{i-1}\Ex\|\tw^k_j\|^2 + 2\Ex \|\tw^k_0\|^2
}
Summing over $i$;
\eq{
	&\hspace{-5mm}\sum_{i=1}^{N-1}\Ex \|\tw^k_i\|^2\nn\\
	\leq& 2 \mu^2\delta^2\sum_{i=1}^{N-1}\sum_{j=0}^{i-1}i\Ex\|\tw^k_j\|^2 + 2N\Ex \|\tw^k_0\|^2\nn\\
	=& 2 \mu^2\delta^2\sum_{j=0}^{N-1}\sum_{i=j+1}^{N-1}i\Ex\|\tw^k_j\|^2 + 2N\Ex \|\tw^k_0\|^2\nn\\
	\leq& \mu^2\delta^2N^2\sum_{j=0}^{N-1}\Ex\|\tw^k_j\|^2 + 2N\Ex \|\tw^k_0\|^2\nn\\
	=& \mu^2\delta^2N^2\sum_{j=1}^{N-1}\Ex\|\tw^k_j\|^2 + (2N+\mu^2\delta^2N^2)\Ex \|\tw^k_0\|^2
}
Rearranging terms, we get
\eq{
	\sum_{i=1}^{N-1}\Ex \|\tw^k_i\|^2\leq&\frac{2N+\mu^2\delta^2N^2}{1-\mu^2\delta^2N^2}\Ex \|\tw^k_0\|^2
}
Let $k\to\infty$, then
\eq{
	\limsup_{k\to\infty}\sum_{i=1}^{N-1}\Ex \|\tw^k_i\|^2 = O(\mu^2)
}
Noting that every term in the summation is non-negative, we conclude that for all \( j\):
\eq{
	\limsup_{k\to\infty}\Ex \|\tw^k_j\|^2 \le \limsup_{k\to\infty}\sum_{i=1}^{N-1}\Ex \|\tw^k_i\|^2 = O(\mu^2)
}\vspace{-3mm}

{\color{black}
	\section{Proof of corollary \ref{lemma.corollary.2}}\label{proof.corollary.2}\vspace{-1mm}
	For completeness, it is easy to modify our derivation to arrive at a similar conclusion to  \cite{Gurbuzbalaban2015} for the diminishing step-size scenario.
	
	Observe that inequality (\ref{13g,sdfgew}) continues to hold for decaying step-sizes:
	\eq{
		\Ex\|\tw_0^{k+1}\|^2 \leq\;& \left(1-\frac{1}{2}\mu(k) N \nu \right) \Ex\|\tw^k_0\|^2+\frac{2\mu(k)^3\delta^2N^3}{\nu}\mathcal{K} \label{xdfe.gwe}
	}
	For simplicity, we only consider step-size sequences of the form:
	\eq{
		\mu(k) = \frac{c}{k+1}, \;\;\;\;k\geq 0
	}
	where $c$ is some positive constant. Then, we can exploit Chung's Lemma \cite{polyak1987introduction} or \cite[Lemma F.5]{sayed2014adaptation}
	%
	to conclude  that the convergence rate of $\Ex\|\tw_0^{k+1}\|^2$ is $O(1/k^2)$. The relationship between the number of epoch $k$ and iteration $i$ is linear. Therefore, it also follows that the convergence rate is $O(1/i^2)$.
}
\vspace{-4mm}

\section{Proof of Lemma \ref{lemma.2}} \label{app.noise}\vspace{-1mm}
We employ mathematical induction. First, it is easy to verify that $f(1;X,\beta) = {\rm Var}(X)$. Now, assuming (\ref{eq.lemma2}) is correct for case $n$, we consider  case $n+1$:
\begin{align}
f(n+1;X, \beta)
&= \Ex \Bigg\|\sum_{j=1}^{n+1}\beta^{n+1-j} x_{\bsigma(j)}\Bigg\|^2\nn\\
&= \Ex \Bigg\|\beta\sum_{j=1}^{n}\beta^{n-j} x_{\bsigma(j)}+x_{\bsigma(n+1)}\Bigg\|^2\nn\\
&= \beta^2\Ex\Bigg \|\sum_{j=1}^{n}\beta^{n-j} x_{\bsigma(j)}\Bigg\|^2+\Ex \|x_{\bsigma(n+1)}\|^2\nn\\
&\hspace{7mm} + 2\beta\Ex \Bigg(\sum_{j=1}^{n}\beta^{n-j} x_{\bsigma(j)}\Bigg)\tran x_{\bsigma(n+1)} \label{gu9ng.xd}
\end{align}
From the uniform random reshuffling property \eqref{prop2}, we know that:
\eq{
	\Ex\|x_{\bsigma(n+1)}\|^2 = {\rm Var}(x) \label{zdxc.CS}
}
For the cross terms, we exploit the law of total expectation\cite{durrett2010probability}:
\eq{
	&\hspace{-2mm}\Ex\Bigg(\sum_{j=1}^{n}\beta^{n-j} x_{\bsigma(j)}\Bigg)\tran x_{\bsigma(n+1)}\nn\\
	&=\Ex_{\bsigma(1:n)}\!\!\left[\!\Ex_{\bsigma(n+1)}\Bigg(\sum_{j=1}^{n}\beta^{n-j} x_{\bsigma(j)}\Bigg)\tran x_{\bsigma(n+1)}\Big|\,\bsigma(1:n)\!\right]\nn\\
	&\stackrel{\eqref{prop3}}{=}\Ex_{\bsigma(1:n)}\left[\Bigg(\sum_{j=1}^{n}\beta^{n-j} x_{\bsigma(j)}\Bigg)\tran \left(\frac{1}{N-n}\sum_{j\notin \bsigma(1:n)}x_j\right)\right]\nn\\
	&=-\frac{1}{N-n}\Ex_{\bsigma(1:n)}\left[\Bigg(\sum_{j=1}^{n}\beta^{n-j} x_{\bsigma(j)}\Bigg)\tran\sum_{j=1}^{n}x_{\bsigma(j)}\right]\nn\\
	&=-\frac{1}{N-n}\Ex_{\bsigma(1:n)}\sum_{j=1}^{n}\beta^{n-j}\|x_{\bsigma(j)}\|^2 \nn\\
	&\;\;\;\;\;\;-\frac{1}{N-n}\Ex_{\bsigma(1:n)}\sum_{i=1}^{n}\beta^{n-i} \left(\sum_{j=1, j\neq i}^{n}x\tran_{\bsigma(i)}x_{\bsigma(j)}\right) \label{gwe090jg2}
}
Without loss of  generality, we assume $i<j$ in the following argument. If $i>j$, exchanging the place of $x_{\bsigma(i)}$ and $x_{\bsigma(j)}$ leads to the same conclusion:
\begin{align}
\Ex_{\bsigma(1:n)}\big[x\tran_{\bsigma(i)}x_{\bsigma(j)}\big] =\,&\,
\Ex_{\bsigma(i),\bsigma(j)}\big[x\tran_{\bsigma(i)}x_{\bsigma(j)}\big]\nn\\
=\,&\, \Ex_{\bsigma(i)}\left\{ x\tran_{\bsigma(i)}\Ex_{\bsigma(j)}[ x_{\bsigma(j)}\,|\, \bsigma(i)]\right\}\nn\\
\stackrel{(\ref{prop3})}{=}&\, -\frac{1}{N-1} \Ex_{\bsigma(i)}\|x_{\bsigma(i)}\|^2\nn\\
=&\, -\frac{1}{N-1} {\rm Var}(X) \label{oijw.ni}
\end{align}
Substituting \eqref{oijw.ni} into \eqref{gwe090jg2}, we obtain:
\begin{align}
&\hspace{-5mm}\Ex\Bigg(\sum_{j=1}^{n}\beta^{n-j} x_{\bsigma(j)}\Bigg)\tran x_{\bsigma(n+1)}\nn\\
=& -\frac{1}{N-n}\left(\sum_{j=1}^{n}\beta^{n-j} -\sum_{j=1}^n\beta^{n-j}\frac{n-1}{N-1}\right){\rm Var}(X) \nn\\
=&\; -\frac{1}{N-1} \sum_{j=1}^n \beta^{j-1} {\rm Var}(X)\label{eq.63}
\end{align}
Combining \eqref{gu9ng.xd}, \eqref{zdxc.CS},  and \eqref{eq.63}, we get:
\begin{align}
&\hspace{-3mm}f(n+1;X, \beta)\nn\\
=& \beta^2 f(n;X, \beta) + {\rm Var}(X) - \frac{2}{N-1} \sum_{j=1}^n \beta^{j} {\rm Var}(X)\nn\\
=&\left(\beta^2\frac{(\sum_{i=0}^{n-1}\beta^{2i})N\!-\!(\sum_{i=0}^{n-1}\beta^i)^2}{N-1} \!+\! 1 \!-\! \frac{2\sum_{j=1}^n \beta^{j}}{N-1}\right) {\rm Var}(X)
\nn\\
=&\frac{(\sum_{i=1}^{n}\beta^{2i})N-(\sum_{i=1}^{n}\beta^i)^2 + (N-1) - 2\sum_{j=1}^n \beta^{j}}{N-1}  {\rm Var}(X)
\nn\\
=& \frac{(\sum_{i=0}^{n}\beta^{2i})N-(\sum_{i=0}^{n}\beta^i)^2}{N-1}{\rm Var}(X)
\end{align}
Hence, we conclude that (\ref{eq.lemma2}) is valid.

Next, the proof of \eqref{eq.lemma2.2} is  similar. It is easy to verify that $F(1;X, B) = R_x$. Assuming (\ref{eq.lemma2.2}) is correct for case $n$, we consider  case $n+1$:
\begin{align}
&\hspace{-5mm}F(n+1; X, B) \nn\\
=\;&\Ex \!\left[\!\sum_{j=1}^{n}B^{n-j}x_{\bsigma(j)} + x_{\bsigma(n+1)}\!\right]\!\!\left[\sum_{j=1}^{n}x_{\bsigma(j)}\tran B^{n-j} +x_{\bsigma(n+1)}\!\right]\nn\\
=\,& B F(n;X,B) B + \Ex\sum_{j=1}^{n}B^{n-j}x_{\bsigma(j)} x_{\bsigma(n+1)}\tran\nn\\
&\;\; {}+\Ex\sum_{j=1}^{n}x_{\bsigma(n+1)} x_{\bsigma(j)}\tran B^{n-j} + R_s\nn\\
\stackrel{(\ref{prop3})}{=}&B F(n;X,B) B - \frac{1}{N-n}\Ex\sum_{j=1}^{n}\sum_{i=1}^{n}B^{n-j}\ x_{\bsigma(j)} x_{\bsigma(i)}\tran\nn\\
&\;\; {}-\frac{1}{N-n}\Ex\sum_{j=1}^{n}\sum_{i=1}^{n} x_{\bsigma(i)} x_{\bsigma(j)}\tran B^{n-j} +R_s
\nn\\
\stackrel{(a)}{=}\,&B F(n;X,B)B - \frac{1}{N-1}\sum_{j=1}^{n}B^{n-j}R_s\nn\\
&\;\; {}-\frac{1}{N-1}\sum_{j=1}^{n}R_s B^{n-j} + R_s \label{9jgh2.d}
\end{align}
where in the step (a) we use the same trick as (\ref{eq.63}):
\eq{
	&\hspace{-5mm}\Ex\sum_{j=1}^{n}\sum_{i=1}^{n}B^{n-j}x_{\bsigma(j)}x_{\bsigma(i)}\tran\nn\\
	=\,&\Ex\sum_{j=1}^{n}B^{n-j}x_{\bsigma(j)}x_{\bsigma(j)}\tran +
	\Ex \sum_{j=1}^{n}\sum_{i\neq j}B^{n-j}x_{\bsigma(j)}x_{\bsigma(i)}\tran\nn\\
	=\,&\sum_{j=1}^{n}B^{n-j}R_s - \frac{1}{N-1}\Ex\sum_{j=1}^{n}B^{n-j}x_{\bsigma(j)}x_{\bsigma(j)}
}
Now if we substitute the $F(n;X,B)$ according to (\ref{eq.lemma2.2}) into (\ref{9jgh2.d}), we will conclude that the format of (\ref{eq.lemma2.2}) is still valid for $F(n+1; X, B)$, which completes the proof.

\section{Proof of Theorem \ref{main.theorem}} \label{app.main.theorem}
We introduce the eigen-decomposition \cite{Horn03}
\eq{
	H = U\Lambda U\tran
}
where $U$ is orthogonal and $\Lambda$ is diagonal with positive entries. Transforming  (\ref{eq.long_term.rec}) into the eigenvector space of $H$, we obtain:
\begin{align}
U\tran \widetilde{\w}_0^{\prime k+1} = (I-\mu\Lambda)^N U\tran \widetilde{\w}_0^{\prime k} - \mu   U\tran s'(\w_0^k) \label{89g23.ge}
\end{align}
Let
\be
\bar \w_0^{k} \define U\tran \widetilde{\w}_0^{\prime k}
\ee
and introduce any positive-definite matrix $\Sigma$. Computing the weighted square norm of both sides of (\ref{89g23.ge}) and taking expectations we get
\begin{align}
\Ex \|\bar \w_0^{k+1}\|^2_{\Sigma} \stackrel{\eqref{noise.zero}}{=} \Ex \|(I-\mu\Lambda)^N\bar \w_0^{k}\|^2_{\Sigma} + \mu^2\Ex \| U\tran s'(\w_0^k)\|^2_\Sigma
\end{align}
where $\|x\|^2_\Sigma \define x\tran\Sigma x$ and we are free to choose $\Sigma$. The cross term is canceled thanks to property \eqref{noise.zero}.
Letting $k\to\infty$, we get:
\begin{align}
\lim_{k\to \infty} \Ex \|\bar{\w}_{0}^k\|^2_{\Sigma - (I-\mu \Lambda)^N\Sigma (I-\mu \Lambda)^N} = \lim_{k\to \infty}\mu^2\Ex \| U\tran s'(\w_0^k)\|^2_\Sigma \label{eq.limit}
\end{align}
To recover the mean-square-deviation $\Ex \|\bar{\w}_{0}^k\|^2$, we choose $\Sigma$ as the solution to the Lyapunov equation:
\be
\Sigma - (I-\mu \Lambda)^N\Sigma (I-\mu \Lambda)^N = I\label{8hj9g.3}
\ee
which is given by
\begin{align}
\Sigma^\star = \sum_{k=0}^\infty (I - \mu \Lambda)^{2N k}
=& \left(I - (I-\mu \Lambda)^{2N}\right)^{-1} \label{eq.weight.solution}
\end{align}
{\color{black} where we require $\mu<\frac{1}{\delta}$ for the stability of infinite series summation.}
The desired MSD is given by:
\begin{align}
{\rm MSD}^{\rm lt}_{\rm RR} \define\lim_{k\to \infty} \Ex \|\widetilde{\w}_{0}^{\prime k}\|^2
=\lim_{k\to \infty} \Ex \|\bar{\w}_{0}^k\|^2
\end{align}
and, hence, 
\begin{align}
&\hspace{-5mm}\lim_{k\to \infty} \Ex \|\bar{\w}_{0}^k\|^2
\nn\\
\stackrel{(\ref{eq.limit})}{=}&\,
\lim_{k\to \infty}\mu^2\Ex \| U\tran s'(\w^k_0)\|^2_{\Sigma^\star}\nn\\
=&\, \lim_{k\to \infty}\mu^2 \Tr\left(U\Sigma^\star U\tran \Ex s'(\w_0^k) s'(\w_0^k)\tran\right) \nn\\
=&\,\lim_{k\to \infty}\mu^2 \Tr\left(U\Sigma^\star U\tran \Ex s'(w^\star) s'(w^\star)\tran\right) +
\nn\\
&\;\;\lim_{k\to \infty}\mu^2 \Tr\left(U\Sigma^\star U\tran \Ex s'(\w_0^k)s'(\w_0^k)\tran - \Ex s'(w^\star) s'(w^\star)\tran\right)\nn\\
=& \mu^2 \Tr\left(U\Sigma^\star U\tran \Ex s'(w^\star) s'(w^\star)\tran\right) + O(\mu^4) \label{89jgws.ge}
\end{align}
The proof of last equality is provided in Appendix \ref{g892.gf3}.
Combining \eqref{eq.weight.solution} and the fact that $U$ is the eigenvector matrix of $H$, we get:
\eq{
	&\hspace{-5mm}{\rm MSD}^{\rm lt}_{\rm RR} \nn\\
	=\,&\mu^2 \Tr\left(U\sum_{k=0}^\infty (I - \mu \Lambda)^{2N k} U\tran \Ex s'(w^\star) s'(w^\star)\tran\right) + O(\mu^4)\nn\\
	=\,&\mu^2 \Tr\left(\sum_{k=0}^\infty (I - \mu H)^{2N k}  \Ex s'(w^\star) s'(w^\star)\tran\right) + O(\mu^4)\nn\\
	=\,&\mu^2 \Tr\left(\big(I-(I-\mu H)^{2N}\big)^{-1}  R_s^{\prime \star}\right) + O(\mu^4)\label{msd.lt}
}
{\color{black}
As for the convergence rate, we can follow the same argument in \cite[Chapter 4]{sayed2014adaptation} to get
\eq{
	\alpha \define  (1 - \mu \lambda_{\min}(H))^{2N} \approx 1 - 2\mu \lambda_{\min}(H) N
}
}
\vspace{-3mm}

{\color{black}
\section{Proof of Theorem \ref{theorem.iterations}}\label{app.theorem.iterations}
Using a similar approach to \eqref{eq.error_expand}, we have 
\eq{
	\widetilde{\w}_{i}^{\prime k}
	= (I-\mu H)^i\widetilde{\w}_{0}^{\prime k} - \mu\underbrace{\sum_{j=1}^{i}(I-\mu H)^{i-j} s_{\bsigma^k(i)}(\w_{0}^k)}_{\define s_i'(\w_0^k)} \label{eq.long_term.i}\raisetag{4mm}
}
where 
\eq{
	\Ex[s_i'(\w_0^k) | \w_0^k] = \mu \sum_{j=1}^{i}(I-\mu H)^{i-j} \Ex \big[s_{\bsigma^k(i)}(\w_{0}^k) | \w_0^k\big] =0\nn
}
Computing the squared norm and taking expectations we get:
\eq{
	\Ex\|\widetilde{\w}_{i}^{\prime k}\|^2 = \Ex\|(I-\mu H)^i\widetilde{\w}_{0}^{\prime k}\|^2 + \mu^2\Ex\|s_i'(\w_0^k)\|^2
} 
We assume $\mu$ is sufficiently small so that $\|I-\mu H\| \leq 1-\mu\nu$, i.e. requiring $\mu \leq \frac{2}{\nu+\delta}$ and let  $t = (1-\mu\nu)^i$. Then, 
\eq{
	\Ex\|\widetilde{\w}_{i}^{\prime k}\|^2 \leq(1-\mu\nu)^{2i}\Ex\|\widetilde{\w}_{0}^{\prime k}\|^2 + \mu^2 \Ex\|s_i'(\w_0^k)\|^2
}
From Lemma \ref{lemma.2}, we know that 
\eq{
	R^{\prime\star}_{s,i}\define&\Ex s_i'(w^\star) s_i'(w^\star)\tran \nn\\
	=&\; \frac{N\left(\sum_{j=0}^{i-1} (I-\mu H)^jR_s^\star(I-\mu H)^j\right)}{N-1} \\
	&\;\; {}- \frac{\big[\sum_{j=0}^{i-1}(I-\mu H)^j\big] R_s^\star \big[\sum_{j=0}^{i-1}(I-\mu H)^j\big]}{N-1} \nn
}
and 
\eq{
	\Ex\|s_i'(w^\star)\|^2 = \Tr\big(\Ex s_i'(w^\star) s_i'(w^\star)\tran\big)
} 
With  $k\to\infty$, we obtain:\vspace{0.5mm}
\eq{
	\lim_{k\to\infty}\Ex\|\widetilde{\w}_{i}^{\prime k}\|^2\leq&
	(1-\mu\nu)^{2i}{\rm MSD}_{\rm RR}^{\rm lt} + \mu^2 \Tr\big(R^{\prime\star}_{s,i}\big)	 + O(\mu^4) \label{89n.sdg}
}
where the $O(\mu^4)$ term comes from the same argument in \eqref{89jgws.ge}. 

Substituting the result \eqref{msd.lt}, we have
\eq{
	\lim_{k\to\infty}\Ex\|\widetilde{\w}_{i}^{\prime k}\|^2\leq&(1-\mu\nu)^{2i}\mu^2\Tr\left(\big(I-(I-\mu H)^{2N}\big)^{-1}  R_{s}^{\prime \star}\right)\nn\\
	&	 +\mu^2\Tr\left( R_{s,i}^{\prime \star}\right) + O(\mu^4)\label{4h3dx}
	}
Lastly, we multiple $\Big(1\!-\!(1\!-\!\mu\nu)^{2i}\Big)$ and its inverse at the second term of \eqref{4h3dx}, which results in \eqref{2389g.cs}.
}

\section{Mismatch of Gradient Noise in (\ref{89jgws.ge})}\label{g892.gf3}\vspace{-2mm}
In this appendix, we will show that
\eq{
&\hspace{-3mm}\lim_{k\to \infty}\mu^2 \Tr\left(U\Sigma^\star U\tran \Ex s'(\w_0^k)s'(\w_0^k)\tran - \Ex s'(w^\star) s'(w^\star)\tran\right)\nn\\
 &= O(\mu^4)
}
which is equivalent to showing
\eq{
&\hspace{-3mm}\lim_{k\to \infty}\Tr\left(U\Sigma^\star U\tran \Ex s'(\w_0^k)s'(\w_0^k)\tran - \Ex s'(w^\star) s'(w^\star)\tran\right)\nn\\
&= O(\mu^2)
}
Using the inequality that $|\Tr(X)| \leq c\|X\|$ for any square matrix and some constant $c$,
we can just focus on the norm instead of trace:
\eq{
	&\hspace{-4mm}\left\|U\Sigma^\star U\tran \big(\Ex s'(\w_0^k)s'(\w_0^k)\tran - \Ex s'(w^\star) s'(w^\star)\tran\big)\right\|\nn\\
	&\leq \|U\Sigma^\star U\tran\| \left\|\Ex s'(\w_0^k)s'(\w_0^k)\tran - \Ex s'(w^\star) s'(w^\star)\tran\right\|
	\nn\\
	&= O(1/\mu)\left\|\Ex s'(\w_0^k)s'(\w_0^k)\tran - \Ex s'(w^\star) s'(w^\star)\tran\right\|
}
where the last equality is due to 
\eq{
	\|U\Sigma^\star U\tran\|
	=&\left\|\big(I -(I-\mu \Lambda)^{2N}\big)^{-1}\right\|\nn\\
	=&\left\|\big(2N\mu \Lambda + O(\mu^2)\big)^{-1}\right\|\nn\\
	=& O(1/\mu)
}
This result implies that we now need to show
\eq{
\lim_{k\to \infty}\left\|\Ex s'(\w_0^k)s'(\w_0^k)\tran - \Ex s'(w^\star) s'(w^\star)\tran\right\| = O(\mu^3)\nn
}

Since we have already established an expression for the covariance matrix of the gradient noise in \eqref{agg.noise.result} we have:
\begin{align}
&\hspace{-5mm}\Ex [s'(\w_0^k)s'(\w_0^k)\tran\,|\,\w_0^k]\nn\\
=&\; \frac{N\left(\sum_{i=0}^{N-1} (I-\mu H)^iR_s^k(I-\mu H)^i\right)}{N-1}\nn\\
&\;\; {}- \frac{\big[\sum_{i=0}^{N-1}(I-\mu H)^i\big] R_s^k \big[\sum_{i=0}^{N-1}(I-\mu H)^i\big]}{N-1}
\end{align}
Thus,
\eq{
	&\hspace{-7mm}\Ex [s'(\w_0^k)s'(\w_0^k)\tran\,|\,\w_0^k] - \Ex s'(w^\star) s'(w^\star)\tran
	\nn\\
	=&\, \frac{N\left(\sum_{i=0}^{N-1} (I-\mu H)^i\widetilde{R}_s^k(I-\mu H)^i\right)}{N-1}\nn\\
	&\; {}- \frac{\big[\sum_{i=0}^{N-1}(I-\mu H)^i\big] \widetilde{R}_s^k \big[\sum_{i=0}^{N-1}(I-\mu H)^i\big]}{N-1}\label{f32h78fwf}
}
where
\eq{
\widetilde{R}_s^k \define& R_s^k - R_s^\star \label{Rs_tilde}\\
R_s^k \define&\frac{1}{N} \sum_{n=1}^N \s_n(\w_0^k) \s_n(\w_0^k)\tran \label{Rs_k}\\
R_s^\star \define&\frac{1}{N} \sum_{n=1}^N \s_n(w^\star) \s_n(w^\star)\tran \label{Rs_star}
}
To simplify the notation, we rewrite the first term as follows:
\eq{
	&\hspace{-7mm}N\left(\sum_{i=0}^{N-1} (I-\mu H)^i\widetilde{R}_s^k(I-\mu H)^i\right)\nn\\
	=&\; \sum_{i=0}^{N-1}\sum_{j=0}^{N-1}(I-\mu H)^i\widetilde{R}_s^k(I-\mu H)^i\label{r32h89.g32}
}
Similarly, the second term:\vspace{-1mm}
\eq{
	&\hspace{-7mm}\left[\sum_{i=0}^{N-1}(I-\mu H)^i\right] \widetilde{R}_s^k \left[\sum_{i=0}^{N-1}(I-\mu H)^i\right]\nn\\[-1mm]
	=&\; \sum_{i=0}^{N-1}\sum_{j=0}^{N-1}(I-\mu H)^i\widetilde{R}_s^k(I-\mu H)^j\label{r2h389.ghj8}
}
Subtracting (\ref{r32h89.g32}) from (\ref{r2h389.ghj8}) we obtain (in the following, the notation $O(\mu^m)$ is a matrix where each entry can be bounded by $O(\mu^m)$):
\eq{
&\hspace{-6mm}
\Ex [s'(\w_0^k)s'(\w_0^k)\tran\,|\,\w_0^k] - \Ex s'(w^\star) s'(w^\star)\tran\nn\\
=&\;\frac{1}{N-1}\sum_{i=0}^{N-1}\sum_{j=0}^{N-1}(I-\mu H)^i\widetilde{R}_s^k[(I-\mu H)^i - (I-\mu H)^j]\nn\\
\stackrel{(a)}{=}&\;\frac{1}{N-1}\sum_{i=0}^{N-1}\sum_{j=0}^{N-1}(I-\mu H)^i\widetilde{R}_s^k[\mu (j-i)H + O(\mu^2) ]
\nn\\
\stackrel{(b)}{=}&\;\frac{1}{N-1}\mu \sum_{i=0}^{N-1}\sum_{j=0}^{N-1}(I-\mu H)^i\widetilde{R}_s^k (j-i)H + \widetilde{R}_s^kO(\mu^2)
\nn\\
\stackrel{(c)}{=} &\;\frac{1}{N-1} \mu \sum_{i=0}^{N-1}\sum_{j=0}^{N-1}\big(I+O(\mu)\big)\widetilde{R}_s^k (j-i)H + \widetilde{R}_s^kO(\mu^2)\nn\\
=&\;\frac{1}{N-1}\mu \underbrace{\sum_{i=0}^{N-1}\sum_{j=0}^{N-1}\widetilde{R}_s^k (j-i)}_{=0}H +
O(\mu^2)\widetilde{R}_s^kH+ \widetilde{R}_s^kO(\mu^2)\nn\\
=&\;O(\mu^2)\widetilde{R}_s^k H+ \widetilde{R}_s^kO(\mu^2) \label{r31289.g1}
}
where steps (a) and (c) use the binomial expansion, and step (b) assumes the step-size is small enough so that $I-\mu H$ is stable.
Next, we conclude:
\eq{
	&\hspace{-4mm}\left\|\Ex s'(\w_0^k)s'(\w_0^k)\tran - \Ex s'(w^\star) s'(w^\star)\tran\right\|\nn\\
	=\,&\left\|\Ex_{\w^k_0}\Big[ \Ex s'(\w_0^k)s'(\w_0^k)\tran\,|\, \w_0^k\Big] - \Ex s'(w^\star) s'(w^\star)\tran \right\|
	\nn\\
	\stackrel{(a)}{\leq}\,&\,\Ex_{\w^k_0}\left\|\Big[ \Ex s'(\w_0^k)s'(\w_0^k)\tran\,|\, \w_0^k\Big] - \Ex s'(w^\star) s'(w^\star)\tran \right\|
	\nn\\
	\stackrel{(\ref{r31289.g1})}{=}&\Ex_{\w^k_0}\|O(\mu^2)\widetilde{R}_s^k H+ \widetilde{R}_s^kO(\mu^2)\|
	\nn\\
	\leq\,&\, O(\mu^2)\Ex\| \widetilde{R}_s^k\|
}
where step (a) applies Jensen's inequality. Lastly, we prove
\be
	\lim_{k\to\infty} \Ex\| \widetilde{R}_s^k\| = O(\mu)
\ee
From \eqref{Rs_tilde}-\eqref{Rs_star}, we have
\eq{\label{28hanelkcuys}
\widetilde{R}_s^k =& R_s^k - R_s^\star \nnb
=&\;\frac{1}{N}\sum_{n=1}^{N}\left[ s_n(\w_0^k) s_n(\w_0^k)\tran - s_n(w^\star) s_n(w^\star)\tran \right] \nnb
=&\; \frac{1}{N}\sum_{n=1}^{N}\left[ s_n(\w_0^k) [s_n(\w_0^k)-s_n(w^\star)]\tran\right.
\nnb&\;\hspace{12mm}
\left.{}+[s_n(\w_0^k)- s_n(w^\star)] s_n(w^\star)\tran\right]
}
Next, it is easy to verify that $s_{n}(w)$ is also $2\delta$-Lipschitz continuity:
\eq{
	&\hspace{-6mm}\|s_{n}(\w_{0}^k)- s_{n}(w^\star)\|
	\nn\\
	\stackrel{}{\leq}&\; \|\nabla J(\w_{0}^k) - \nabla J(w^\star)\|
	 + \|\nabla Q(\w_{0}^k; x_{n})-\nabla Q(w^\star; x_{n}) \|
	\nn\\
	\stackrel{(\ref{eq-ass-cost-lc-e})}{\leq}&\; 2\delta \|\widetilde{\w}^k_0\|\label{noise.lipschitz}
}
Taking the expectation of the norm of \eqref{28hanelkcuys}:
\eq{
\Ex\|\widetilde{R}_s^k\|\leq&\, \frac{1}{N}\sum_{n=1}^{N}\Ex\left\| s_n(\w_0^k) [s_n(\w_0^k)-s_n(w^\star)]\tran\right.
\nnb&\;\hspace{12mm}
\left.{}+[s_n(\w_0^k)- s_n(w^\star)] s_n(w^\star)\tran\right\|
\nnb
\stackrel{\eqref{noise.lipschitz}}{\leq}&\;\frac{2}{N}\sum_{n=1}^{N}\Ex \left(\|s_n(\w_0^k) \| \delta\|\widetilde{\w}^k_0\| + \delta\|\widetilde{\w}^k_0\| \|s_n(w^\star)\|\right)
\nn\\
\leq&\frac{2\delta}{N}\!\sum_{n=1}^{N} \!\sqrt{\Ex\! \|s_n(\w_0^k) \|^2\Ex\|\widetilde{\w}^k_0\|^2} \!+\! \sqrt{\Ex\!\|\widetilde{\w}^k_0\|^2} \|s_n(w^\star)\|\label{389jg4.3}
}
where the last inequality exploits the Cauchy-Schwartz inequality.
Next, as we prove in theorem \ref{lemma.start.pont}, when $k\gg 1$:
\eq{
	\Ex\|\widetilde{\w}^k_0\|^2 \,=&\; O(\mu^2)\\
	\Ex \|s_n(\w_0^k) \|^2 \leq\,&\;2\Ex \|s_n(\w_0^k) -s_n(w^\star)\|^2 +2\Ex \|s_n(w^\star)\|^2\nn\\
	\leq& O(\mu^2)+O(1) = O(1)
}
Substituting the previous results into (\ref{389jg4.3}), we conclude
\eq{
\Ex\|\widetilde{R}_s^k\|\leq&\,	\frac{\delta}{N}\sum_{n=1}^{N}\left(\sqrt{O(\mu^2)O(1)} + \sqrt{O(\mu^2)}O(1)\right)
\nn\\
=&\;O(\mu), \;\; k\gg1
}\vspace{-3mm}

\section{Bound on long-term difference}\label{mismatch.proof}\vspace{-2mm}
Subtracting (\ref{eq.error_expand}) from (\ref{eq.long_term.rec}) and then taking the conditional expectation, we obtain:
\begin{align}
&\hspace{-3mm}\Ex\big[\,\|\widetilde{\w}_0^{k+1} - \widetilde{\w}_0^{\prime k+1}\|^2 \,|\,\widetilde{\w}_0^{k} ,  \widetilde{\w}_0^{\prime k}\,\big]
\nn\\
\leq&\; \frac{1}{t}\|(I-\mu H)^N\| \|\widetilde{\w}_{0}^k -\widetilde{\w}_{0}^{\prime k}\|^2\nn\\
& +\frac{2\mu^2}{1-t}\! \underbrace{\Ex \!\left\|\sum^N_{i=1} ( I \!-\! \mu H )^{N-i} \left(s_{\bsigma^k(i)}(\w_{i-1}^k)\!-\! s_{\bsigma^k(i)}(\w_{0}^k)\right)\right\|^2}_{B}
\nn\\
& +  \frac{2\mu^2}{1-t} \underbrace{\Ex\left\|\sum^N_{i=1} ( I - \mu H )^{N-i} \xi(\w_{i-1}^k)\right\|^2}_{C}\label{h89gh92}
\end{align}
where we exploit the Jensen's inequality and $0<t<1$.
In the following, we assume the step size is sufficiently small so that:
\eq{
	\|I-\mu H\| \leq 1-\mu \nu
}
\noindent Now, we find a tighter bound on the $B$ term:
\eq{
B\stackrel{(a)}{\leq}&\Ex \!\left(\!\sum^N_{i=1}\left\|\! ( I \!-\! \mu H )^{N-i}\right\|\! \left\|s_{\bsigma^k(i)}(\w_{i-1}^k)\!-\! s_{\bsigma^k(i)}(\w_{0}^k)\right\|\!\right)^2
\nn\\
\stackrel{\eqref{noise.lipschitz}}{\leq}&\,\Ex  \left(\sum^N_{i=1}\left\| ( I - \mu H )^{N-i}\right\| 2\delta\mu\left\| \w_{i-1}^k - \w_{0}^k\right\|\right)^2
\nn\\
=&\,4\delta^2\mu^2\sum_{i=1}^N \sum_{j=1}^N \|I-\mu H\|^{(N-i)(N-j)}\times
\nn\\
&\;\; \Ex\!\left\| \!\sum_{n=1}^{i-1}\grad Q(\w_{n-1}^k;x_{\bsigma^k(n)})\right\|\!
\left\| \sum_{n=1}^{j-1}\grad\! Q(\w_{n-1}^k;x_{\bsigma^k(n)})\right\|
\nn\\
\stackrel{(b)}{\leq}&\,4\delta^2\mu^2\sum_{i=1}^N \sum_{j=1}^N (1-\mu\nu)^{(N-i)(N-j)}\times
\nn\\
& \!\sqrt{\!\Ex\!\left\|\! \sum_{n=1}^{i-1}\hspace{-0.5mm}\grad\hspace{-0.5mm} Q(\w_{n-1}^k;x_{\bsigma^k(n)})\right\|^2 \!\hspace{-1.2mm}\Ex\!
\left\|\! \sum_{n=1}^{j-1}\hspace{-0.5mm}\grad\hspace{-0.5mm} Q(\w_{n-1}^k;x_{\bsigma^k(n)})\right\|^2}
\nn\\
=\,&4\delta^2\mu^2\hspace{-1mm}\left(\hspace{-0.7mm}\sum_{i=1}^N (1\!-\!\mu\nu)^{N-i}\hspace{-1.2mm}
\sqrt{\Ex\Bigg\| \sum_{n=1}^{i-1}\grad Q\big(\w^k_{n-1};x_{\bsigma^k(n)}\big)\Bigg\|^2}\right)^2
\nn\\
\label{24h89f3.g32}
}
where step (a) exploits the triangular inequality, and the sub-multiplicative property of norms, and step (b) uses Cauchy-Schwartz.
Then, we establish the following when $k$ is large enough:
\eq{
&\hspace{-5mm}\Ex\Big\| \sum_{n=1}^{i-1}\grad_w Q\big(\w^k_{n-1};x_{\bsigma^k(n)}\big)\Big\|^2 \nn\\
=&\,\Ex\Bigg\| \sum_{n=1}^{i-1}\Big(\grad Q(\w_0^k;x_{\bsigma^k(n)}) - \grad Q(w^\star;x_{\bsigma^k(n)}) +\nn\\
&\;\;\;\;\grad Q(w^\star;x_{\bsigma^k(n)}) \Big)\Bigg\|^2
\nn\\
\stackrel{}{\leq}&\,2\Ex\Bigg\| \sum_{n=1}^{i-1}\grad_w Q\big(w^\star;x_{\bsigma^k(n)}\big)\Bigg\|^2 \nn\\
&\;\;+2\Ex\Bigg\| \sum_{n=1}^{i-1}\left(\grad Q(\w_0^k;x_{\bsigma^k(n)}) - \grad Q(w^\star;x_{\bsigma^k(n)})
\right)\Bigg\|^2 
\nn\\
=&\,2\frac{(i-1)N-(i-1)^2}{N-1}\cK + O(\mu^2)
}
where the last equality is because we already conclude from Lemma \ref{lemma.2} and \eqref{gradient.noise} that
\eq{
	\Ex\left\| \sum_{n=1}^{i-1}\hspace{-0.5mm}\grad_w Q(w^\star;x_{\bsigma^k(n)})\right\|^2 =& \frac{(i-1)N-(i-1)^2}{N-1}\cK
}
Moreover, we know that for sufficiently large $k$:
\eq{
	&\hspace{-5mm}\Ex\left\| \sum_{n=1}^{i-1}\left(\grad_w Q(\w_0^k;x_{\bsigma^k(n)}) - \grad_w Q(w^\star;x_{\bsigma^k(n)}) \right)\right\|^2
	\nn\\
	\leq&\,(i-1)\Ex\sum_{n=1}^{i-1}\left\| \grad_w Q(\w_0^k;x_{\bsigma^k(n)}) - \grad_w Q(w^\star;x_{\bsigma^k(n)})\right\|^2
	\nn\\
	\leq&\, \delta^2 (i-1)\Ex\sum_{n=1}^{i-1}\|\widetilde{\w}_{i-1}^k\|^2
	\nn\\
	=&\,O(\mu^2)
}
Substituting previous results into (\ref{24h89f3.g32}):
\eq{
B
\leq&4\delta^2\mu^2\Bigg(\sum_{i=1}^N  (1-\mu\nu)^{N-i}\times\\
&\hspace{15mm}
\sqrt{2\frac{(i-1)N-(i-1)^2}{N-1}+O(\mu^2)}\Bigg)^2\!\cK\nn
}
We know for any $0\leq i\leq N$
\eq{
	\frac{(i-1)N-(i-1)^2}{N-1}\leq \frac{N^2}{4(N-1)}
}
and, hence, 
\eq{
	B\leq& 4\delta^2\mu^2\left(\frac{N^2}{2(N-1)} + O(\mu^2)\right) \left(\frac{1-(1-\mu\nu)^N}{\mu\nu}\right)^2\cK\nn\\
	=&\frac{2\delta^2N^2}{\nu^2(N-1)}(1-(1-\mu\nu)^N)^2\cK +O(\mu^2)\label{bound.B}
}
We can bound the term $C$ when epoch $k$ is sufficiently large:
\eq{
	C \leq& N \sum_{i=1}^N\Ex\|(I-\mu N)^{N-i} \xi(\w^k_{i-1})\|^2\nn\\
	\stackrel{\eqref{3h98u.ni}}{\leq}& \frac{\kappa^2N}{4} \sum_{i=1}^N \Ex\|\tw_{i-1}^k\|^4\nn\\
	=&O(\mu^4) \label{gi.d}
}
where the last equality is due to \eqref{xcnweu-2}:
\eq{
	\|\tw_0^{k+1}\|^4 \leq\;& \left(\Big(1-\frac{1}{2}\mu N\nu\Big)\|\tw^k_0\|^2+\frac{2\mu^3\delta^2N^3}{\nu}\mathcal{K}\right)^2\nn\\
	\leq\;& \frac{\Big(1-\frac{1}{2}\mu N\nu\Big)^2}{s}\|\tw^k_0\|^4+\frac{4\mu^6\delta^4N^6}{(1-s)\nu^2}\mathcal{K}^2
}
Let $s=1-\frac{1}{2}\mu N\nu$, we obtain:
\eq{
	\|\tw_0^{k+1}\|^4 \leq\;&\Big(1-\frac{1}{2}\mu N\nu\Big)\|\tw^k_0\|^4 + \frac{8\mu^5\delta^4N^5}{\nu^3}\mathcal{K}^2
}
After letting $k\to\infty$ and taking expectation, we conclude $\Ex \|\widetilde{\w}_0^k\|^4=O(\mu^4)$.

Lastly, choosing $t=(1-\mu\nu)^N$ in \eqref{h89gh92} and combining \eqref{bound.B} and \eqref{gi.d}, we establish:
\eq{
	&\hspace{-4mm}\Ex\big[\,\|\widetilde{\w}_0^{k+1} - \widetilde{\w}_0^{\prime k+1}\|^2\big]\nn\\ \leq&\, (1-\mu\nu)^N\Ex\|\widetilde{\w}_0^{k} - \widetilde{\w}_0^{\prime k}\|^2\nn\\
	&\;+\frac{2\mu^2}{1-(1-\mu\nu)^N}\frac{2\delta^2N^2}{\nu^2(N-1)}(1-(1-\mu\nu)^N)^2\cK + O(\mu^4)
}
Letting $k\to \infty$, we conclude
\eq{
	\Ex\|\widetilde{\w}_0^{k} - \widetilde{\w}_0^{\prime k}\|^2\leq \frac{4\mu^2\delta^2N^2}{\nu^2(N-1)}\cK +O(\mu^3)
}

\smallskip
\bibliographystyle{IEEEbib}
\bibliography{random_reshuffle}

\end{document}